\documentclass{article}
\usepackage[utf8]{inputenc}
\usepackage[T1]{fontenc}
\usepackage[preprint, main]{neurips_2026}
\usepackage{hyperref}
\usepackage{url}
\usepackage{booktabs}
\usepackage{amsfonts}
\usepackage{nicefrac}
\usepackage{microtype}
\usepackage{xcolor}
\usepackage{amsmath, amssymb, amsthm}
\usepackage{graphicx}
\usepackage{algorithm}
\usepackage{algorithmic}
\usepackage{float}
\usepackage{bbm}
\setcitestyle{authoryear,round,citesep={;},aysep={,},yysep={;}}
\usepackage[most]{tcolorbox}
\definecolor{studentbg}{RGB}{255,246,235}
\definecolor{teacherbg}{RGB}{238,252,240}
\newtcolorbox{methodbox}[1][]{enhanced, breakable,
  colback=white, colframe=black!25,
  boxrule=0.6pt, arc=2.5pt,
  borderline west={2.6pt}{0pt}{teal!70!black},
  left=9pt, right=9pt, top=7pt, bottom=7pt,
  shadow={0.9pt}{-0.9pt}{0pt}{black!10},
  fontupper=\normalsize,
  colbacktitle=teal!10!white, coltitle=teal!80!black,
  fonttitle=\bfseries\sffamily,
  titlerule=0pt,
  title={#1},
  before skip=9pt, after skip=9pt}

\newtcolorbox{resultbox}[1][]{enhanced, breakable,
  colback=white, colframe=black!25,
  boxrule=0.6pt, arc=2.5pt,
  borderline west={2.6pt}{0pt}{purple!70!black},
  left=9pt, right=9pt, top=7pt, bottom=7pt,
  shadow={0.9pt}{-0.9pt}{0pt}{black!10},
  fontupper=\normalsize,
  colbacktitle=purple!10!white, coltitle=purple!80!black,
  fonttitle=\bfseries\sffamily,
  titlerule=0pt,
  title={#1},
  before skip=9pt, after skip=9pt}

\newtheorem{theorem}{Theorem}
\newtheorem{lemma}[theorem]{Lemma}
\newtheorem{proposition}[theorem]{Proposition}
\newtheorem{corollary}[theorem]{Corollary}
\newtheorem{definition}{Definition}
\newtheorem{assumption}{Assumption}
\newtheorem{remark}{Remark}

\newcommand{\mstd}[2]{#1\if\relax\detokenize{#2}\relax\else{$\scriptstyle\,\pm\,#2$}\fi\%}

\title{PACED: Distillation and On-Policy Self-Distillation at the Frontier of Student Competence}

\author{
    Yuanda Xu\thanks{Equal contribution.}\,
    \thanks{Correspondence to \texttt{yuanda@math.princeton.edu}} \quad
    Hejian Sang\footnotemark[1] \quad
    Zhengze Zhou\footnotemark[1] \quad
    Ran He\footnotemark[1] \quad
    Zhipeng Wang \\
    LinkedIn Corporation \\
    \texttt{\{yuanda@math.princeton.edu, hejian@alumni.iastate.edu, zz433@cornell.edu,} \\
    \texttt{rh2528@columbia.edu, 
    zhipeng.wang@alumni.rice.edu\}}
}

\begin{document}

\maketitle

\begin{abstract}

Standard LLM distillation treats all training problems equally---wasting compute on problems the student has already mastered or cannot yet solve.
We empirically show that this inefficiency has a precise gradient-level signature: the cross-problem gradient signal-to-noise ratio (SNR) follows a bell curve over student pass rate, collapsing at both extremes.

We propose \textsc{Paced}, which weights each problem by $w(p) = p(1{-}p)$ where $p$ is the student's empirical pass rate---concentrating training on the \emph{zone of proximal development}.
This requires only student rollouts, no architectural changes, and no hyperparameters.
We prove the Beta kernel $w(p) = p^\alpha(1{-}p)^\beta$ is the leading-order optimal weight family arising from the SNR boundary-collapse structure, and is minimax-robust under misspecification (worst-case efficiency loss $O(\delta^2)$).

Across Qwen3, Qwen2.5, and Llama-3 families, \textsc{Paced} sets a new state of the art in our experimental setting on MATH-500, AIME~2024, and AIME~2025, improving over unweighted distillation by up to $\mathbf{+8.2}$ and over the strong AKL baseline by up to $\mathbf{+3.6}$, while reducing forgetting to $\mathbf{1.4\%}$ and $\mathbf{0.6\%}$ in distillation and self-distillation.
A two-stage forward-then-reverse KL schedule pushes gains further to $\mathbf{+5.8}$ over standard forward KL on the hardest benchmark.
\end{abstract}

\section{Introduction}
\label{sec:intro}
Knowledge distillation trains a student model to imitate a teacher, yet standard practice spreads the training budget \emph{uniformly} across all problems. This is wasteful: some problems are already mastered and provide redundant signal, while others are far beyond the student's current reach and produce noisy, incoherent gradients that can erode previously learned knowledge~\citep{french1999catastrophic, kirkpatrick2017ewc}.

We begin with an empirical observation that makes this intuition precise. To our knowledge, we are the \textbf{first to directly measure the cross-problem gradient signal-to-noise ratio (SNR) in distillation as a function of student competence}. The measurement reveals a striking pattern (Figure~\ref{fig:snr_empirical}): the SNR follows a \textbf{bell-shaped curve that collapses at both boundaries}. At $p \approx 0$, gradients from diverse intractable problems are directionally incoherent. At $p \approx 1$, per-problem gradients persist (the teacher's distribution may remain sharper) but point in problem-specific directions that \emph{disperse} in parameter space, canceling when averaged. The maximum SNR---where each gradient step most efficiently advances the student's capability frontier---lies at intermediate pass rates.

This observation motivates \textbf{P}roficiency-\textbf{A}daptive \textbf{C}ompetence \textbf{E}nhanced \textbf{D}istillation (\textsc{Paced}): weight each problem by $w(p) = p(1{-}p)$, concentrating training on the \emph{zone of proximal development}~\citep{vygotsky1978mind}---problems where the student sometimes succeeds and sometimes fails. The weighting requires only student rollouts, no architectural changes, and no hyperparameters.

Why this specific functional form? Taking the boundary-collapse structure as a regularity condition, we show that any SNR profile with power-law decay at both boundaries decomposes as $p^{a'}(1{-}p)^{b'} \cdot e^{r(p)}$ with bounded remainder (Proposition~\ref{prop:representation}), making the Beta kernel $w(p) = p^\alpha(1{-}p)^\beta$ the leading-order, maximum-parsimony weight family. This is not merely a convenient parametrization: the Beta kernel is minimax-optimal under bounded misspecification, with worst-case efficiency loss only $O(\delta^2)$ (Theorem~\ref{thm:minimax}). Curriculum learning~\citep{bengio2009curriculum, kumar2010selfpaced} addresses related concerns but relies on fixed difficulty annotations or predetermined schedules; in distillation, difficulty depends on \emph{who} is solving each problem and \emph{when}---a problem intractable at epoch~1 may become productive by epoch~5.

We validate \textsc{Paced} across two settings: \textbf{Distillation} (Qwen3-8B\textsubscript{GRPO} $\to$ Qwen3-1.7B, forward KL) and \textbf{Self-distillation} (Qwen2.5-Math-7B-Instruct, reverse KL), with cross-family generalization to Llama-3.1-8B-Instruct. Our contributions:
\begin{enumerate}
  \item \textbf{A novel empirical finding with theoretical characterization.} We provide the first direct measurement of cross-problem gradient SNR in distillation, revealing its bell-shaped collapse as a function of student pass rate (Figure~\ref{fig:snr_empirical}). We then prove the Beta kernel is the leading-order optimal and minimax-robust weight family arising from this structure (Propositions~\ref{prop:boundary}--\ref{prop:representation}, Theorem~\ref{thm:minimax}).
  \item \textbf{Strong gains with near-zero forgetting.} \textsc{Paced} improves over unweighted distillation by up to $+3.9$ on AIME~2024 (forward KL) and up to $+8.2$ on AIME~2025 (reverse KL), while keeping MMLU forgetting $\leq 1.4\%$---matching the best-retention baseline while strictly outperforming it on reasoning. Gains generalize across Qwen3, Qwen2.5, and Llama-3 families.
  \item \textbf{A two-stage KL schedule.} Forward KL (mode-covering) followed by reverse KL (mode-seeking) pushes gains further to $+4.6/+4.9/+5.8$ over standard forward KL on MATH-500/AIME~2024/AIME~2025, confirming that the two KL directions are complementary.
\end{enumerate}
An overview appears in Figure~\ref{fig:overview}.

\begin{figure}[t]
  \centering
  \includegraphics[width=\textwidth]{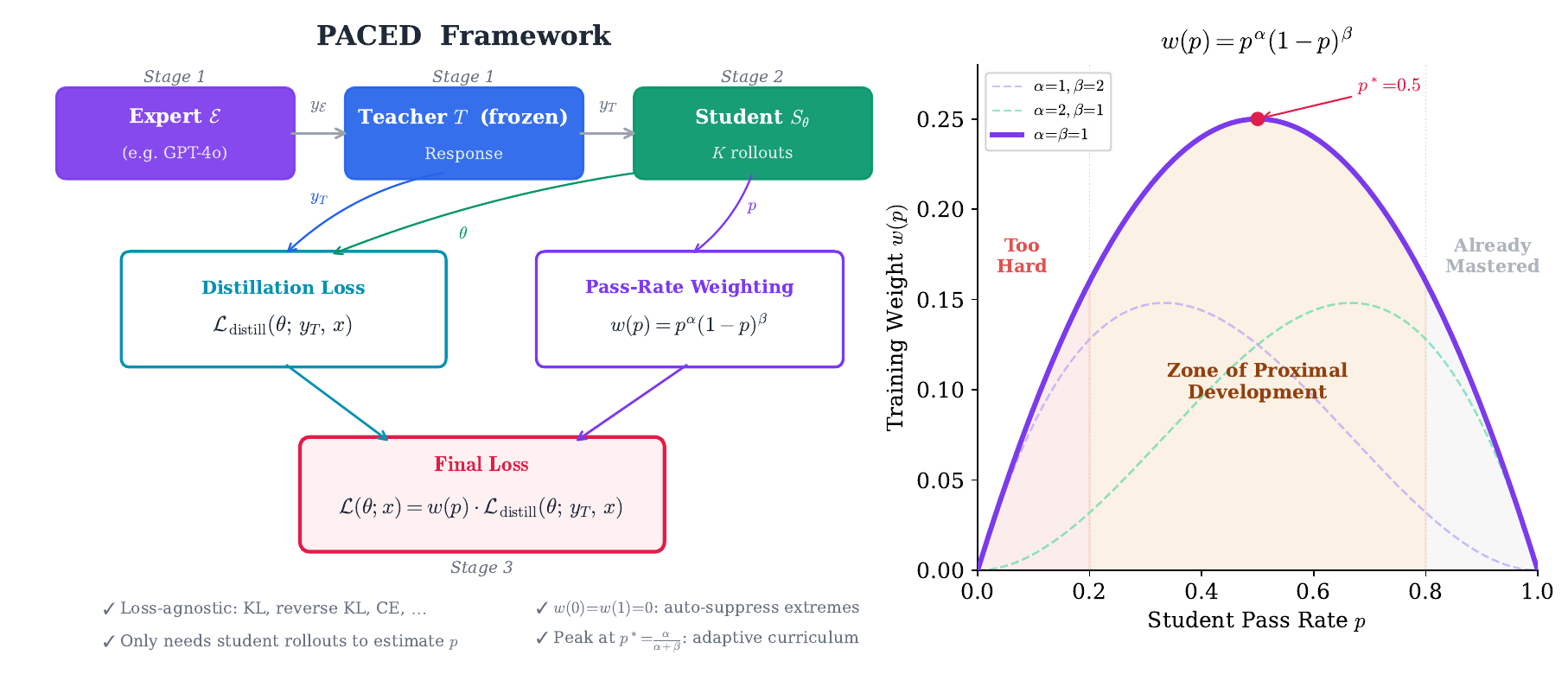}
  \caption{\textbf{Overview of \textsc{Paced}.} \emph{Left:} The pipeline---an expert provides reference solutions, and the student learns via a distillation loss weighted by pass rate. \emph{Right:} The Beta-kernel weighting $w(p)=p^\alpha(1-p)^\beta$ concentrates training on the zone of proximal development, suppressing trivial and intractable problems.}
  \label{fig:overview}
\end{figure}

\section{Related Work}
\label{sec:related_work}

\textbf{Knowledge Distillation.}
The idea of training a smaller model to mimic a larger one dates to~\citet{hinton2015distilling}, who showed that the ``soft'' distribution over classes carries richer information than hard labels alone. Since then, the field has explored sequence-level distillation~\citep{kim2016sequencekd}, reverse KL objectives~\citep{gu2024minillm, agarwal2024gkd}, distribution-aligned methods~\citep{yan2026dasd, boizard2024uld}, and regression-based approaches~\citep{ba2014deep, kim2021klmse, wang2020minilm}. A common thread runs through this work: all samples are treated alike. Our contribution is to break this symmetry, letting the student's own competence determine where training effort flows---regardless of the underlying loss function.

\textbf{Curriculum Learning.}
\citet{bengio2009curriculum} articulated the principle that models benefit from seeing easier examples first. Self-paced learning~\citep{kumar2010selfpaced} and automated curriculum design~\citep{graves2017automated} extended this intuition in various directions. However, existing approaches typically rely on fixed difficulty annotations or predetermined schedules; our Beta-kernel weight adapts automatically from the student's pass rate alone.

\textbf{Sample Reweighting and Catastrophic Forgetting.}
Importance sampling~\citep{katharopoulos2018not} and meta-learned weights~\citep{ren2018learning} require per-sample gradient computation; our Beta-kernel weight is a closed-form function of the pass rate alone.
In RL, ACE~\citep{xu2025ace} modulates per-rollout penalties within GRPO/DAPO; \textsc{Paced} operates at the problem level and the two are complementary.
Forgetting mitigation via parameter constraints~\citep{kirkpatrick2017ewc, lopezpaz2017gem, farajtabar2020ogd} is orthogonal to our approach, which prevents forgetting by filtering harmful training signals before they reach the optimizer.

\textbf{On-Policy Distillation and Self-Distillation.}
GKD~\citep{agarwal2024gkd} trains on student-generated samples; SDFT~\citep{shenfeld2026sdft} identifies student and teacher as the same model. Recent extensions address privileged traces~\citep{zhao2026sdr}, context-conditioned reverse KL~\citep{ye2026opcd}, and conciseness-conditioned self-distillation~\citep{sang2026opsdc}. Our pass-rate weighting is orthogonal: it determines \emph{which} problems to prioritize, regardless of loss or generation policy.

Table~\ref{tab:method_feature_matrix} summarizes the key design features that distinguish \textsc{Paced} from representative prior methods.\label{sec:method_positioning_summary}

\begin{table}[H]
\centering
\caption{\textbf{Method-feature comparison.} \checkmark\ = primary design characteristic.}
\vspace{-0.5em}
\footnotesize
\setlength{\tabcolsep}{4pt}
\begin{tabular}{@{}lccccc@{}}
\toprule
\textbf{Feature} & \textbf{Self-Dist.} & \textbf{AdaRFT} & \textbf{AdaKD} & \textbf{AKL} & \textbf{\textsc{Paced}} \\
\midrule
Adaptive weighting / curriculum &  & \checkmark & \checkmark & \checkmark & \checkmark \\
Student-side competence signal &  & \checkmark &  &  & \checkmark \\
Implicit forgetting reduction & \checkmark &  &  &  & \checkmark \\
Loss-agnostic &  &  & \checkmark &  & \checkmark \\
Theoretically grounded &  &  &  & \checkmark & \checkmark \\
\bottomrule
\end{tabular}
\label{tab:method_feature_matrix}
\end{table}

\section{Methodology}
\label{sec:method}
\textsc{Paced} rests on a single core idea: a weighting scheme that directs distillation toward the problems where it can do the most good (Section~\ref{sec:filtering}).

\subsection{Problem Setup}
\label{sec:setup}

We use two disjoint training splits: $\mathcal{D}^{\text{dist}}$ for distillation and $\mathcal{D}^{\text{self}}$ for self-distillation. Let $T$ denote the frozen teacher model and $S_\theta$ the student model. In distillation, $T$ is a GRPO-finetuned same-family model (Qwen3-8B trained with GRPO~\citep{shao2024deepseekmath} on DAPO-Math-17k, denoted Qwen3-8B\textsubscript{GRPO}) and $S_\theta$ is Qwen3-1.7B. In self-distillation, $T$ is a frozen copy of Qwen2.5-Math-7B-Instruct and $S_\theta$ is the trainable copy. In both settings, $T$ is fixed while $\theta$ is updated.

For each prompt $x$, distillation uses a teacher-side reference response $y_T$. To construct attainable targets, an external expert $\mathcal{E}$ (e.g., a frontier API or strong open-weight model) first produces a solution $y_{\mathcal{E}}$, and the frozen teacher $T$ then regenerates it in its own distributional voice, $y_T \sim P_T(y \mid x, y_{\mathcal{E}})$, producing a target naturally within the model family's expressive range. The student sees only $x$; the teacher sees $(x, y_{\mathcal{E}})$. This \emph{prompt asymmetry} converts black-box expert supervision into white-box, same-family distillation signals (full token-level logits rather than hard-label SFT); Appendix~\ref{app:prompts} provides the template.

To measure the student's current competence on $x$, we sample $K$ rollouts from the student and compute the \textbf{pass rate}:

\begin{equation}
  p(x; \theta) = \frac{1}{K} \sum_{k=1}^K \mathbbm{1}\left[\texttt{correct}(y_S^{(k)}, x)\right], \quad y_S^{(k)} \sim \pi_\theta(\cdot \mid x)
\end{equation}
The pass rate $p \in [0, 1]$ measures the student's current competence on problem $x$.

\subsection{Pass-Rate Weighting}
\label{sec:filtering}

\textbf{Motivation.} Given a reference target $y_T$ for each prompt, the core design question is how to weight problems according to the student's current competence. Not all training problems contribute equally. At one extreme ($p \approx 0$), the student cannot solve the problem at all; logit gradients are large but point in near-random directions across prompts, offering high variance and little useful signal. At the other extreme ($p \approx 1$), individual per-problem gradients need not vanish---for distribution-matching losses, the teacher's distribution may remain sharper than the student's---but the remaining corrections are problem-specific calibration refinements (e.g., sharpening predictions on algebraic tokens for one problem, adjusting geometric reasoning for another) that \emph{disperse} in parameter space. When averaged across mastered problems, these dispersed corrections largely cancel, yielding low cross-problem SNR even though each individual gradient carries signal. In practice a substantial fraction of problems falls into these low-SNR extremes---e.g., with Qwen3-1.7B on DAPO, roughly $49\%$ of problems have $p < 0.2$ or $p > 0.8$ (the exact proportion depends on the model and dataset). The richest---highest signal-to-noise ratio---gradient signal concentrates at \emph{intermediate} difficulty, where a coherent skill gap provides a shared gradient direction and each update advances the student's capability frontier. This raises a natural question: \emph{what is the principled weight function that exploits this structure?}

\textbf{Theoretical answer.} In distillation, the gradient signal-to-noise ratio (SNR) collapses at both boundaries via cross-problem gradient incoherence: at $p \to 0$ (diverse intractable problems) and $p \to 1$ (dispersed problem-specific refinements; Proposition~\ref{prop:boundary}). Under power-law regularity at the boundaries (Assumption~\ref{asm:passrate_structure}(b)), any such SNR profile decomposes as $p^{a'}(1-p)^{b'} \cdot e^{r(p)}$ with bounded remainder (Proposition~\ref{prop:representation}). The leading-order, maximum-parsimony weight family is therefore the Beta kernel:
\begin{equation}
  \boxed{w(p) = p^{\alpha}(1-p)^{\beta}}
  \label{eq:beta_kernel}
\end{equation}
with peak at $p^* = \alpha/(\alpha + \beta)$. The default choice $\alpha = \beta = 1$ gives $w(p) = p(1-p)$, which is symmetric around $p^* = 0.5$, zero at the boundaries, and equals the inverse Bernoulli Fisher information (Remark~\ref{rem:fisher}). Asymmetric choices ($\alpha \neq \beta$) shift the peak to prioritize harder or easier problems. This form is minimax-robust: even when the true SNR profile deviates from the Beta model by a multiplicative factor $e^{\pm\delta}$, the worst-case efficiency loss is only $O(\delta^2)$ (Theorem~\ref{thm:minimax}). See Appendix~\ref{app:proof_minimax} for the full derivation.

\subsection{Overall Algorithm}
\label{sec:overall}

Putting the pieces together: each problem's contribution to the loss is scaled by how informative it is for the student right now:
\begin{equation}
  \mathcal{L}(\theta; x) = w(p) \cdot \mathcal{L}_{\text{distill}}(\theta; x)
  \label{eq:per_sample_loss}
\end{equation}
where $p = p(x; \theta)$, $w(p) = p(1-p)$ by default, and $\mathcal{L}_{\text{distill}}$ is chosen by training setting. To keep the instantiation clean, we bind one KL direction to each setting: distillation (Qwen3-8B\textsubscript{GRPO} $\to$ Qwen3-1.7B) uses forward KL, and self-distillation (Qwen2.5-Math-7B-Instruct) uses reverse KL. This pairing reflects their roles: forward KL favors broad teacher-mode coverage when the teacher is stronger, while reverse KL favors compact, high-confidence modes when teacher and student are near-policy. Concretely, we use:
\begin{itemize}
  \item \textbf{Distillation track (Qwen3): Forward KL} along the teacher sequence $y_T$: $\sum_t D_{KL}\!\bigl(p_T(\cdot \mid y_{T,<t})\,\|\,p_S(\cdot \mid y_{T,<t})\bigr)$.
  \item \textbf{Self-distillation track (Qwen2.5): Reverse KL} along a student sequence $y_S \sim \pi_\theta(\cdot \mid x)$: $\sum_t D_{KL}\!\bigl(p_S(\cdot \mid y_{S,<t})\,\|\,p_T(\cdot \mid y_{S,<t})\bigr)$.
\end{itemize}
\begin{equation}
  \mathcal{L}(\theta) = \frac{1}{N} \sum_{i=1}^{N} \mathcal{L}(\theta; x_i)
  \label{eq:total_loss}
\end{equation}

\begin{algorithm}[H]
\caption{\textsc{Paced}: Competence-Aware Distillation with Pass-Rate Weighting}
\label{alg:main}
\begin{algorithmic}[1]
\REQUIRE Prompt dataset $\mathcal{D}$, expert $\mathcal{E}$, frozen teacher $T$, student $S_\theta$, distillation loss $\mathcal{L}_{\text{distill}}$ (forward KL or reverse KL), weight exponents $(\alpha, \beta)$ (default $\alpha{=}\beta{=}1$), rollouts $K$
\STATE \textbf{// Stage 1: Teacher-side target preparation (forward KL only)}
\IF{$\mathcal{L}_{\text{distill}}$ is forward KL}
\FOR{each prompt $x \in \mathcal{D}$}
\STATE $y_{\mathcal{E}} \leftarrow \mathcal{E}(x)$ \hfill\COMMENT{Expert rollout (e.g., GPT-family API solution)}
 \STATE $y_{T} \leftarrow T(\cdot \mid x, y_{\mathcal{E}})$ \hfill\COMMENT{Teacher regeneration conditioned on expert solution}
\ENDFOR
\ENDIF
\STATE \textbf{// Stage 2: One-shot pass-rate estimation (paper setting)}
\FOR{each prompt $x_i \in \mathcal{D}$}
\STATE Sample $\{y_{S,i}^{(k)}\}_{k=1}^K \sim \pi_\theta(\cdot \mid x_i)$
\STATE $p_i \leftarrow \frac{1}{K}\sum_k \mathbbm{1}[\texttt{correct}(y_{S,i}^{(k)}, x_i)]$
\STATE $w_i \leftarrow p_i^{\alpha}(1-p_i)^{\beta}$ \hfill\COMMENT{Default: $w_i = p_i(1-p_i)$}
\ENDFOR
\STATE $\tilde{w}_i \leftarrow w_i \,/\, \bar{w}$ for all $i$ \hfill\COMMENT{Normalize to unit mean and keep fixed during training}
\STATE \textbf{// Stage 3: Weighted Distillation}
\FOR{each training iteration}
 \FOR{each prompt $x_i \in \mathcal{D}$}
 \IF{$\mathcal{L}_{\text{distill}}$ is forward KL}
 \STATE $\mathcal{L}(x_i) \leftarrow \tilde{w}_i \cdot \mathcal{L}_{\text{distill}}(\theta; y_{T,i}, x_i)$ \hfill\COMMENT{Teacher-forced distillation}
 \ELSE
 \STATE Sample $y_{S,i} \sim \pi_\theta(\cdot \mid x_i)$
 \STATE $\mathcal{L}(x_i) \leftarrow \tilde{w}_i \cdot \mathcal{L}_{\text{distill}}(\theta; y_{S,i}, x_i)$ \hfill\COMMENT{Reverse-KL self-distillation on student rollouts}
 \ENDIF
 \ENDFOR
 \STATE Update $\theta$ via gradient descent on $\frac{1}{N}\sum_i \mathcal{L}(x_i)$
\ENDFOR
\STATE \textbf{// Optional extension:} periodically recompute $\{p_i,\tilde{w}_i\}$ every $T_0$ steps
\end{algorithmic}
\end{algorithm}

\textbf{Iterative Refinement.} In most experiments we estimate pass rates once before optimization and keep the resulting weights fixed (single-pass weighting); this suffices because the Beta kernel is minimax-robust to stale pass rates (Theorem~\ref{thm:minimax}). When compute allows, pass rates can be recomputed periodically or at stage boundaries to track evolving competence; Appendix~\ref{app:ablation_recompute} ablates both options.

\subsection{Theoretical Guarantees}
\label{sec:theory}
We provide theoretical justification for Beta-kernel weighting. The core intuition is simple: at both pass-rate extremes, \emph{cross-problem gradient incoherence} drives the SNR to zero, but for structurally distinct reasons. When $p \to 0$, the student is far from solving the problem, so gradients from diverse intractable prompts are poorly aligned, yielding low SNR. When $p \to 1$, per-problem gradients from distribution-matching losses need not vanish (the teacher may remain sharper), but the remaining corrections are problem-specific refinements that disperse in parameter space---unlike intermediate pass rates where a coherent skill gap provides a shared gradient direction. This suggests that useful learning signal should concentrate in the interior of the pass-rate range, and that a principled weight should be near zero at both boundaries while remaining flexible about where to peak. The results below formalize exactly this picture, showing that under mild regularity the Beta family is the leading-order and robust choice. Full proofs, assumptions, and regime distinctions appear in Appendix~\ref{app:proofs}. The assumptions rest on standard regularity conditions from stochastic optimization~\citep{ghadimi2013sgd, bottou2018optimization}, well-documented properties of gradient statistics in knowledge distillation~\citep{tang2020understanding, sankararaman2020gradient, agarwal2022vog}, and our own empirical SNR measurements. In particular, Appendix~\ref{app:snr} and Figure~\ref{fig:snr_empirical} directly visualize the predicted bell-shaped SNR profile and show that it closely tracks $\sqrt{p(1-p)}$.
\begin{resultbox}[Key Results at a Glance]
\begin{itemize}
\item \textbf{Structural characterization:} Distillation gradient SNR collapses at both boundaries via cross-problem gradient incoherence (Prop~\ref{prop:boundary}); with power-law regularity, any such profile decomposes as $p^{a'}(1{-}p)^{b'} \cdot e^{r(p)}$ with bounded $r$ (Prop~\ref{prop:representation}), yielding the Beta kernel as the leading-order weight family
\item \textbf{Minimax robustness (Main Theorem):} Under bounded misspecification $|r(p)| \leq \delta$, the Beta kernel is minimax-optimal for the low-SNR approximation, with worst-case efficiency $\text{sech}^2(\delta) \geq 1 - \delta^2$, both pointwise and in aggregate (Thm~\ref{thm:minimax})
\item \textbf{Batch-level variance reduction:} $R < 1$ when $-\text{Cov}(\tilde{w}^2, s^2) > \text{Var}(\tilde{w})\,\mathbb{E}[s^2]$ (Prop~\ref{prop:var_reduction})
\item \textbf{Exponent selection:} $(\alpha^*{+}1)/(\alpha^*{+}\beta^*{+}2) = \bar{p}_{\mathcal{Z}}$, $\alpha^*{+}\beta^* = \bar{p}_{\mathcal{Z}}(1{-}\bar{p}_{\mathcal{Z}})/\text{Var}_{\mathcal{Z}}(p) - 3$ (Prop~\ref{prop:exponent_selection})
\end{itemize}
\end{resultbox}

Full proofs appear in Appendix~\ref{app:proofs}, which opens with the empirical SNR measurement (Section~\ref{app:snr}) that motivates the boundary-collapse assumption before developing the formal theory. We briefly unpack the intuition behind Results 3 and 4, which are less immediate than the first two.

\textbf{Intuition for Result 3.}
Non-uniform weighting faces a tug of war: downweighting reduces the effective batch size (increasing variance), but if the downweighted samples are precisely the noisiest ones, the net effect is variance \emph{reduction}. Because gradient noise runs hottest at extreme pass rates~\citep{agarwal2022vog}, the Beta kernel's near-zero boundary weights suppress exactly those high-variance samples, making the coupling term sufficiently negative to win the trade-off ($R<1$). Appendix~\ref{app:proof_convergence} identifies concrete parameter regimes.

\textbf{Intuition for Result 4.}
The moment-matching formula requires only the ZPD pass-rate mean and variance---no gradient computation. When informative problems cluster tightly, it prescribes a peaked kernel; when they spread broadly, a flatter one. See Appendix~\ref{app:proof_exponent} for the derivation.

\textbf{Forgetting reduction.}
By suppressing gradient updates from boundary-pass-rate samples, Beta-kernel weighting substantially reduces catastrophic forgetting; see Tables~\ref{tab:forgetting_distill} and \ref{tab:forgetting_self}.

\section{Experiments}
\label{sec:experiments}

\subsection{Experimental Setup}

\begin{itemize}
    \item \textbf{Training data:} DAPO-Math-17k~\citep{yu2025dapo}. We use two disjoint prompt splits, one for distillation and one for self-distillation.
    \item \textbf{External Expert:} gpt-oss-120b~\citep{openai2025gptoss} for initial solution generation.
    \item \textbf{Teacher/Student Models (split by setting):}
    \begin{itemize}
      \item \textbf{Distillation setting:} Qwen3-1.7B~\citep{qwen2025qwen3} as student, frozen Qwen3-8B\textsubscript{GRPO} (Qwen3-8B finetuned with GRPO on DAPO-Math-17k) as teacher, and \textbf{forward KL} as base loss.
      \item \textbf{Self-distillation setting:} Qwen2.5-Math-7B-Instruct~\citep{yang2024qwen25math} with a frozen self-teacher, and \textbf{reverse KL} as base loss.
    \end{itemize}
    In both settings, the teacher is frozen throughout training; forward-KL targets are teacher regenerations conditioned on expert solutions, whereas reverse KL is computed on student rollouts.
    \item \textbf{Evaluation:}
    \begin{itemize}
      \item \emph{Plasticity} (new skill acquisition): 8-sample mean accuracy on MATH-500~\citep{hendrycks2021math}, AIME 2024, and AIME 2025 (out-of-distribution generalization). For each problem, we sample 8 responses (temperature 0.6, top-$p$ 0.95), compute the fraction of correct samples, and then average this fraction over problems. The $\pm$ intervals in Tables~\ref{tab:main_distill}--\ref{tab:main_self} denote the sampling standard deviation of this mean across problems.
      \item \emph{Stability} (retention of prior knowledge): a fixed random subsample of 2{,}000 questions from MMLU~\citep{hendrycks2021mmlu}. Correctness for pass-rate estimation and the reasoning-benchmark evaluations is determined by normalized final-answer matching, whereas MMLU is evaluated with \texttt{lm-evaluation-harness}~\citep{eval-harness} using 5-shot prompting; details are in Appendix~\ref{app:hyperparameters}.
    \end{itemize}
    \item \textbf{Rollouts:} $K = 8$ rollouts per problem for pass-rate estimation.
    \item \textbf{Pass-rate weight:} Default $w(p)=p(1-p)$ (i.e., $\alpha=\beta=1$). Unless otherwise noted, pass rates are estimated once before optimization; the resulting weights are normalized to unit mean (i.e., $\tilde{w}_i = w_i / \bar{w}$) and kept fixed during training. Appendix~\ref{app:ablation_recompute} ablates periodic recomputation.
    \item \textbf{Training:} AdamW for 2 epochs with global batch size 32 and constant learning rate $1 \times 10^{-7}$.
    \item \textbf{Baselines (setting-specific):}
    \begin{itemize}
      \item \textbf{Distillation/Qwen3:} Forward KL (unweighted), Hard Filter Forward KL, AKL, and \textsc{Paced} Forward KL.
      \item \textbf{Self-distillation/Qwen2.5:} Reverse KL (unweighted), Hard Filter Reverse KL, AKL, and \textsc{Paced} Reverse KL.
      \item \textbf{Hard Filter:} binary problem selection retaining problems with $0.2 \le p \le 0.8$ (i.e., 2 through 6 of 8 rollouts correct); other hyperparameters match the unweighted baseline.
      \item \textbf{AKL~\citep{wu2025akl}:} a token-level adaptive KL baseline that dynamically adjusts the per-token KL coefficient based on teacher--student logit discrepancy. Unlike \textsc{Paced}, it requires no rollout or pass-rate estimation; all other optimization hyperparameters are matched to the corresponding unweighted baseline.
    \end{itemize}
\end{itemize}

\subsection{Main Results (Plasticity-Stability Trade-off)}

\begin{table}[h]
\caption{Distillation track (Qwen3-8B\textsubscript{GRPO} $\rightarrow$ Qwen3-1.7B, forward KL family): reasoning performance (8-sample mean accuracy). $\uparrow$ = higher is better.}
\centering
\begin{tabular}{lccc}
\toprule
\textbf{Method} & \textbf{MATH-500} ($\uparrow$) & \textbf{AIME 2024} ($\uparrow$) & \textbf{AIME 2025} ($\uparrow$) \\
\midrule
Base & \mstd{69.4}{0.4} & \mstd{11.5}{0.9} & \mstd{7.6}{0.7} \\
Forward KL (unweighted) & \mstd{76.8}{0.3} & \mstd{21.2}{1.3} & \mstd{17.0}{0.9} \\
Hard Filter Forward KL & \mstd{78.5}{0.6} & \mstd{23.7}{0.9} & \mstd{18.8}{0.6} \\
AKL & \mstd{77.6}{0.4} & \mstd{23.9}{1.2} & \mstd{19.1}{0.8} \\
\textbf{\textsc{Paced} Forward KL} & \textbf{\mstd{79.4}{0.5}} & \textbf{\mstd{25.1}{1.0}} & \textbf{\mstd{20.6}{0.7}} \\
\bottomrule
\end{tabular}
\label{tab:main_distill}
\end{table}

\begin{table}[h]
\caption{Self-distillation track (Qwen2.5-Math-7B-Instruct, reverse KL family): reasoning performance (8-sample mean accuracy).}
\centering
\begin{tabular}{lccc}
\toprule
\textbf{Method} & \textbf{MATH-500} ($\uparrow$) & \textbf{AIME 2024} ($\uparrow$) & \textbf{AIME 2025} ($\uparrow$) \\
\midrule
Base & \mstd{83.9}{0.6} & \mstd{19.6}{1.0} & \mstd{11.5}{0.7} \\
Reverse KL (unweighted) & \mstd{88.9}{0.5} & \mstd{25.3}{1.2} & \mstd{16.9}{1.1} \\
Hard Filter Reverse KL & \mstd{92.0}{0.5} & \mstd{28.9}{1.3} & \mstd{22.0}{0.9} \\
AKL & \mstd{91.4}{0.5} & \mstd{28.2}{0.8} & \mstd{21.5}{0.6} \\
\textbf{\textsc{Paced} Reverse KL} & \textbf{\mstd{93.7}{0.6}} & \textbf{\mstd{31.6}{1.1}} & \textbf{\mstd{25.1}{0.7}} \\
\bottomrule
\end{tabular}
\label{tab:main_self}
\end{table}

\begin{table}[h]
\caption{Retention in distillation track (Qwen3 forward KL family): MMLU and forgetting ($\Delta$ from base).}
\centering
\begin{tabular}{lccc}
\toprule
\textbf{Method} & \textbf{MMLU} ($\uparrow$) & \textbf{Forgetting} ($\downarrow$) & \textbf{Weighting} \\
\midrule
Base & \mstd{51.2}{0.2} & -- & -- \\
Forward KL (unweighted) & \mstd{48.3}{0.3} & 2.9\% & None \\
Hard Filter Forward KL & \mstd{49.8}{0.5} & 1.4\% & Hard \\
AKL & \mstd{49.5}{0.4} & 1.7\% & Token-level \\
\textbf{\textsc{Paced} Forward KL} & \textbf{\mstd{49.8}{0.4}} & \textbf{1.4\%} & Beta \\
\bottomrule
\end{tabular}
\label{tab:forgetting_distill}
\end{table}

\begin{table}[h]
\caption{Retention in self-distillation track (Qwen2.5 reverse KL family): MMLU and forgetting ($\Delta$ from base).}
\centering
\begin{tabular}{lccc}
\toprule
\textbf{Method} & \textbf{MMLU} ($\uparrow$) & \textbf{Forgetting} ($\downarrow$) & \textbf{Weighting} \\
\midrule
Base & \mstd{70.6}{0.5} & -- & -- \\
Reverse KL (unweighted) & \mstd{68.4}{0.4} & 2.2\% & None \\
\textbf{Hard Filter Reverse KL} & \textbf{\mstd{70.1}{0.4}} & \textbf{0.5\%} & Hard \\
AKL & \mstd{69.8}{0.3} & 0.8\% & Token-level \\
\textsc{Paced} Reverse KL & \mstd{70.0}{0.3} & 0.6\% & Beta \\
\bottomrule
\end{tabular}
\label{tab:forgetting_self}
\end{table}

\textbf{Reasoning (Tables~\ref{tab:main_distill} and \ref{tab:main_self}).}
The pattern is consistent across both tracks. In distillation (Qwen3, forward KL), \textsc{Paced} improves over unweighted forward KL by $+2.6/+3.9/+3.6$ on MATH-500/AIME~2024/AIME~2025. In self-distillation (Qwen2.5, reverse KL), \textsc{Paced} improves over the unweighted baseline by $+4.8/+6.3/+8.2$ on the same three benchmarks, or $+9.8/+12.0/+13.6$ relative to the base model.

\textbf{AKL baseline comparison.}
AKL~\citep{wu2025akl} adapts the KL coefficient \emph{per token} based on logit discrepancy, whereas \textsc{Paced} modulates \emph{per problem} via pass rate.
\textsc{Paced} consistently outperforms AKL on all reasoning benchmarks in both tracks. The gap reflects a structural limitation of token-level schemes: a problem with $p \approx 0$ produces unreliable gradients at \emph{every} token, yet AKL still trains on it because no individual token triggers a large enough logit gap to be downweighted. Problem-level weighting can suppress such intractable (or fully mastered) problems entirely.
The two approaches are orthogonal and could be combined; see Appendix~\ref{app:akl_analysis}.

\textbf{Cross-family generalization.}
To verify that the gains are not specific to the Qwen family, we replicate the distillation experiment on Llama-3.1-8B-Instruct~\citep{grattafiori2024llama3}. The same pattern holds; see Appendix~\ref{app:llama} and Table~\ref{tab:llama_self} for full results.

\textbf{Stability (Tables~\ref{tab:forgetting_distill} and \ref{tab:forgetting_self}).}
In distillation, \textsc{Paced} forward KL reduces forgetting from $2.9$ to $1.4$ percentage points---matching Hard Filter exactly on retention while outperforming it by $+0.9/+1.4/+1.8$ on MATH-500/AIME~2024/AIME~2025. This shows that smooth Beta-kernel weighting recovers the full stability benefit of binary thresholding without sacrificing any reasoning gains. In self-distillation, reverse-KL-based methods already forget less; \textsc{Paced} preserves the strongest reasoning gains, while Hard Filter is slightly better on retention alone ($0.5\%$ vs. $0.6\%$ forgetting). These stability gains align with the competence-distribution picture: roughly $17\%$ of problems have $p<0.2$ and $32\%$ have $p>0.8$ at initialization, so the $p(1{-}p)$ kernel suppresses both tails and concentrates weight on the informative interior. As training proceeds, problems migrate through the ZPD into the mastered regime (Appendix~\ref{app:curriculum}), and empirical gradient SNR exhibits the predicted bell-shaped profile (Appendix~\ref{app:snr}).

\subsection{Two-Stage KL Schedule}
\label{sec:twostage}
With the single-loss picture established, we ask whether staging the KL direction yields further gains.
Forward KL is mode-covering---it spreads the student toward the teacher's stronger reasoning modes; reverse KL is mode-seeking---it sharpens the student onto high-confidence modes.
A natural hypothesis is that mode-coverage should come first, followed by consolidation.
In a matched two-stage Qwen3 experiment (one midpoint pass-rate recomputation, 50/50 budget split), the KL $\to$ RevKL schedule yields $\mathbf{81.4/26.1/22.8}$ on MATH-500/AIME~2024/AIME~2025---improving over single-loss Paced KL by $+1.7/+0.5/+1.7$---while the reversed order underperforms both single-loss references, confirming the mode-coverage-then-consolidation interpretation.
Full order comparison and stage-budget ablation appear in Appendix~\ref{app:twostage_budget} (Table~\ref{tab:twostage_bridge}).

\textbf{Design sensitivity.} Additional ablations in Appendix~\ref{app:ablation_exponents}, Appendix~\ref{app:ablation_K}, and Appendix~\ref{app:ablation_recompute} show that the default symmetric kernel $w(p)=p(1-p)$ offers the best plasticity--stability trade-off in our setting, while smaller rollout budgets and single-pass weighting already retain most of the gains. We therefore keep $(\alpha,\beta)=(1,1)$ and single-pass $K{=}8$ as the main-text defaults.

\section{Discussion: Limitations and Future Work}
\label{sec:discussion}

Several limitations deserve candid acknowledgment.
\textbf{Rollout overhead.} Pass-rate estimation requires $K$ additional student rollouts per problem. With the default $K{=}8$, this adds roughly one inference pass over the training set before optimization begins. The ablations in Appendix~\ref{app:ablation_K} show that $K{=}4$ already captures most of the benefit, halving this cost.
\textbf{Pass-rate granularity and hard-zero boundary.} With $K{=}8$ rollouts, the estimated pass rate takes only nine discrete values $\{0, 1/8, \dots, 1\}$. The Beta kernel's smooth shape mitigates discretization (nearby values receive similar weights), but the granularity may limit the method's ability to distinguish fine competence differences. A related consequence is that a problem with true $p \approx 0.05$ may be estimated as $\hat{p}=0$, receiving exactly zero weight and losing any weak learning signal it might carry. This hard-zero boundary effect is inherent to the Beta kernel's $w(0)=0$ property and becomes more pronounced at small $K$. Larger $K$ improves resolution at the cost of additional rollouts; Table~\ref{tab:ablation_K} shows diminishing returns beyond $K{=}8$.
\textbf{Dependence on explicit correctness signal.} Pass-rate estimation requires a binary correctness verdict for each rollout. This is straightforward for domains with verifiable answers (mathematics, code execution), but does not directly apply to open-ended tasks---such as creative writing, summarization, or general instruction following---where no unambiguous ground-truth reward exists. Extending \textsc{Paced} to such settings would require a proxy competence signal (e.g., reward-model scores or LLM-as-judge evaluations), which introduces its own noise and calibration challenges.
\textbf{Future work.} Natural extensions include continuous KL interpolation between forward and reverse objectives, cross-architecture distillation (where the gradient-dispersion assumption at mastery may behave differently), multi-teacher ensembles, and integration with token-level adaptive methods such as AKL.
\section{Conclusion}

\textsc{Paced} operationalizes a simple teaching principle---focus where the student is learning---for LLM distillation. Beta-kernel pass-rate weighting concentrates gradient budget on the frontier of competence, delivering $+2.6/+3.9/+3.6$ over standard forward KL on MATH-500/AIME~2024/AIME~2025 in distillation, and $+4.8/+6.3/+8.2$ over standard reverse KL in self-distillation, while keeping MMLU forgetting at $1.4\%$ and $0.6\%$, respectively. A two-stage forward-KL-then-reverse-KL schedule pushes gains further to $+4.6/+4.9/+5.8$ over standard forward KL, and the pattern generalizes across model families (Qwen3, Qwen2.5, Llama-3.1).

A practical strength of the framework is that single-pass pass-rate estimation already suffices: the minimax robustness guarantee (Theorem~\ref{thm:minimax}) ensures that even when weights become stale as the student improves, worst-case efficiency loss is only $O(\delta^2)$. The default kernel $w(p)=p(1-p)$ requires no exponent tuning and is robust across all model families and training settings tested. This makes \textsc{Paced} lightweight to deploy---one rollout pass before training, zero architectural changes, and zero hyperparameters.

\renewcommand{\bibfont}{\small}
\bibliographystyle{plainnat}
\bibliography{references}

\newpage
\appendix
\onecolumn
The appendix is organized into four parts: theory and proofs, prompts and implementation details, additional experiments, and additional interpretations.

\section{Theory and Proofs}
\label{app:proofs}

\paragraph{Proof roadmap.}
The main-text results map to the appendix as follows:
\begin{itemize}
  \item \textbf{Result 1 (Structural characterization):} Section~\ref{app:boundary_representation}, Propositions~\ref{prop:boundary} and \ref{prop:representation}.
  \item \textbf{Result 2 (Minimax robustness):} Section~\ref{app:proof_minimax}, Theorem~\ref{thm:minimax}.
  \item \textbf{Result 3 (Batch-level variance reduction):} Section~\ref{app:proof_convergence}, Proposition~\ref{prop:var_reduction} and the convergence analysis.
  \item \textbf{Result 4 (Exponent selection):} Section~\ref{app:proof_exponent}, Proposition~\ref{prop:exponent_selection}.
  \item \textbf{Additional intuition:} Corollary~\ref{prop:zpd} (in Section~\ref{app:boundary_representation}) explains why learning signal peaks at intermediate pass rates.
  \item \textbf{Complementary derivation:} Section~\ref{app:proof_beta}, Theorem~\ref{thm:beta_optimal}, derives the Beta family from per-problem descent maximization.
\end{itemize}
We begin with empirical evidence for the SNR boundary-collapse assumption, followed by the boundary conditions and representation theorem, the non-monotonic learning-signal corollary (Corollary~\ref{prop:zpd}), the complementary descent derivation, minimax robustness, variance analysis, and exponent selection.

\subsection{Empirical Motivation: Cross-Problem Gradient SNR}
\label{app:snr}

Before developing the formal theory, we present the empirical observation that motivates the entire framework. Figure~\ref{fig:snr_empirical} directly measures the cross-problem gradient SNR as a function of student pass rate.

For each problem $i$, the teacher generates one reference solution $y_{T,i}$, and we compute the forward-KL distillation gradient $g_i = \nabla_\theta \sum_t D_{KL}(p_T(\cdot \mid y_{T,i,<t}) \| p_S(\cdot \mid y_{T,i,<t}))$ with respect to the \texttt{lm\_head} parameters. Since the teacher sequence is fixed, each problem yields a single deterministic gradient vector. Student rollouts ($K{=}10$ per problem) are used only to estimate the pass rate $\hat{p}_i$.

We then group problems into equal-width pass-rate bins $\mathcal{B}_j = \{i : \hat{p}_i \in \text{bin}_j\}$ and compute the \textbf{cross-problem SNR} within each bin:
\begin{equation}
  \widehat{\text{SNR}}_j \;=\; \frac{\|\bar{g}_j\|_2}{\sqrt{\frac{1}{|\mathcal{B}_j|}\sum_{i \in \mathcal{B}_j}\|g_i - \bar{g}_j\|_2^2}}, \qquad \bar{g}_j = \tfrac{1}{|\mathcal{B}_j|}\textstyle\sum_{i \in \mathcal{B}_j} g_i,
\end{equation}
where the numerator measures the magnitude of the mean gradient across problems in the bin (signal) and the denominator measures the spread of individual problem gradients around this mean (cross-problem noise). The bin-level SNR values are rescaled to $[0,1]$ by dividing by the largest bin value.

To compare the empirical bars to the theory at the \emph{bin} level, let
\begin{equation}
  \widehat{\text{SNR}}_j^{\,\mathrm{norm}} = \frac{\widehat{\text{SNR}}_j}{\max_k \widehat{\text{SNR}}_k}, \qquad
  \text{SNR}_{\mathrm{th},j}^{\,\mathrm{norm}} = \frac{\sqrt{\bar{p}_j(1-\bar{p}_j)}}{\max_k \sqrt{\bar{p}_k(1-\bar{p}_k)}}.
  \label{eq:snr_normalization}
\end{equation}
Under the leading-order symmetric model $\text{SNR}^2(p) \propto p(1-p)$ used throughout the paper, the theoretical prediction for each bin depends only on its mean pass rate. With 10 equal-width bins the bin nearest $p=0.5$ has the largest $\bar{p}_k(1-\bar{p}_k)$, so its normalized height is $1.0$; other bins scale down accordingly. For example, bins with $\bar{p}_j \approx 0.1$ or $0.9$ have theoretical height $\approx 0.60$, and bins with $\bar{p}_j \approx 0.2$ or $0.8$ have $\approx 0.80$. The bell-shaped profile is clearly visible in both panels: cross-problem SNR peaks at intermediate pass rates---where a coherent skill gap provides a shared gradient direction---and is substantially lower at both boundaries, closely tracking the theoretical prediction. At $p \approx 1$, the low SNR directly reflects gradient dispersion: individual per-problem gradients are non-negligible, but they point in diverse directions across mastered problems, canceling when averaged. Since the theoretically optimal weight is proportional to $\text{SNR}^2$, this confirms that the default $p^\alpha(1{-}p)^\beta$ kernel is well-matched to the empirical gradient structure. The formal theory in the following sections takes this boundary-collapse structure as a modeling assumption and derives the Beta kernel as the leading-order, minimax-robust weight family.

\begin{figure}[h]
  \centering
  \begin{minipage}[t]{0.48\textwidth}
    \centering
    \includegraphics[width=\textwidth]{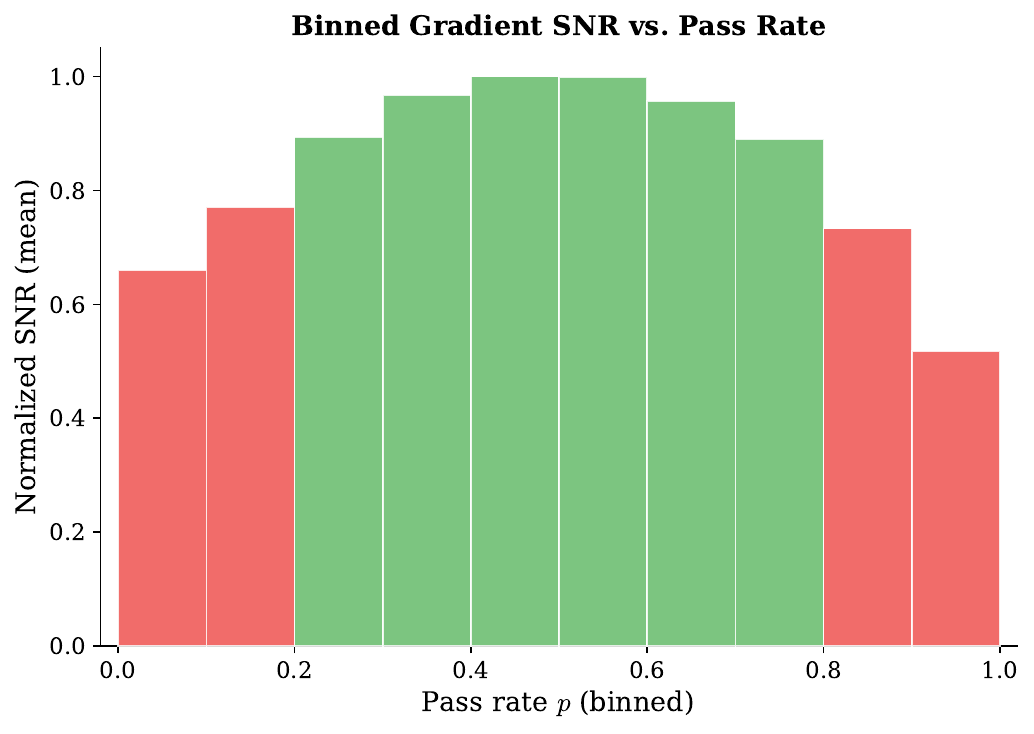}
    \centerline{\small (a) Step 1}
  \end{minipage}
  \hfill
  \begin{minipage}[t]{0.48\textwidth}
    \centering
    \includegraphics[width=\textwidth]{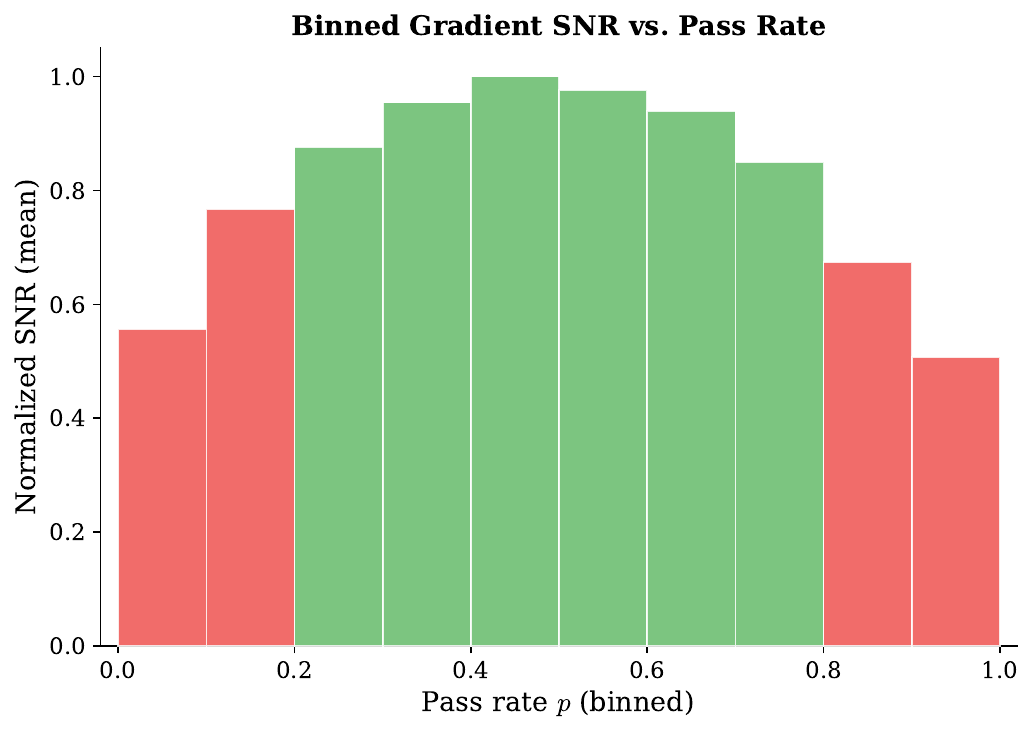}
    \centerline{\small (b) Step 20}
  \end{minipage}
  \caption{\textbf{Cross-problem gradient SNR vs.\ student pass rate at two training stages} (Qwen3-1.7B, forward KL). Left: Step~1. Right: Step~20. Each problem contributes one gradient (from its fixed teacher reference); $K{=}10$ student rollouts are used only for pass-rate estimation. Problems are grouped into equal-width pass-rate bins. Both empirical and theoretical values are normalized by dividing by the respective bin maximum (Eq.~\ref{eq:snr_normalization}), so the tallest bar is $1.0$ in both cases. The bell-shaped profile persists across training stages, confirming that the boundary collapse of cross-problem gradient coherence is a structural property of the distillation landscape, not a transient artifact of initialization.}
  \label{fig:snr_empirical}
\end{figure}

\subsection{Notation and Assumptions}
\label{app:notation_assumptions}

\textbf{Notation.} Throughout the appendix, $p \in [0,1]$ denotes the student pass rate for a problem, and $w(p)\ge 0$ denotes its pass-rate weight (typically a Beta kernel $w(p)=p^\alpha(1-p)^\beta$). We collect the shared assumptions here to avoid forward references in later proofs.

\textbf{Symbol guide.} Three distinct pairs of exponents appear in the analysis and should not be confused:
\begin{itemize}
  \item $(a_s, b_s)$: \emph{signal exponents}---govern how the expected gradient norm $\|\mathbb{E}[g(p)]\|$ scales with $p$ near the boundaries (Assumption~\ref{asm:passrate_structure}(a)).
  \item $(a', b')$: \emph{SNR boundary exponents}---govern the power-law decay of $\text{SNR}^2(p)$ at $p\to 0^+$ and $p\to 1^-$ (Assumption~\ref{asm:passrate_structure}(b)); these determine the shape of the theoretically optimal weight.
  \item $(\alpha, \beta)$: \emph{Beta kernel exponents}---the practitioner-facing hyperparameters in $w(p)=p^\alpha(1-p)^\beta$ (default $\alpha{=}\beta{=}1$).
\end{itemize}

\begin{assumption}[\textbf{Regularity Conditions}]
\label{asm:regularity}
(i)~The total loss $\mathcal{L}(\theta)$ is $L$-smooth; (ii)~per-sample gradients are unbiased; (iii)~per-sample gradient variance is bounded by $\sigma_0^2$. These are standard conditions in stochastic optimization~\citep{ghadimi2013sgd, bottou2018optimization}.
\end{assumption}

\begin{assumption}[\textbf{Bounded Logits and Jacobian}]
\label{asm:bounded_logits}
For all training steps and vocabulary dimensions $v$, the student and teacher logits are bounded as $|l_{S,v}|, |l_{T,v}| \leq B$, and the Jacobian of the student logits with respect to parameters satisfies $\|J_\theta\|_{\text{op}} = \|\partial l_S / \partial \theta\|_{\text{op}} \leq C_J$ for some constants $B, C_J > 0$. In practice, finite-precision arithmetic and weight decay ensure bounded logits, and the Jacobian bound holds for networks with bounded weights and Lipschitz activations.
\end{assumption}

\begin{assumption}[\textbf{Pass-Rate-Dependent Gradient Structure}]
\label{asm:passrate_structure}
The gradient statistics depend on pass rate $p$ through:
\begin{enumerate}
 \item[(a)] \emph{Signal (Expected Gradient Norm):} The expected gradient norm scales as $\|\mathbb{E}[g(p)]\| \propto p^{a_s}(1-p)^{b_s}$ for parameters $a_s, b_s > 0$, so the signal diminishes as $p \to 0$ (too hard) and $p \to 1$ (mastered). The mechanism is \emph{cross-problem gradient incoherence} at both extremes (Proposition~\ref{prop:boundary}), but for structurally distinct reasons. At $p \to 0$, gradients from diverse intractable problems interfere destructively. At $p \to 1$, individual per-problem gradients need not vanish---for distribution-matching losses, the teacher's distribution may remain sharper than the student's---but the remaining corrections are problem-specific calibration refinements that disperse in parameter space, causing $\|\mathbb{E}[g(p)]\|$ to shrink relative to $\sqrt{\mathbb{E}[\|g(p)\|^2]}$. The dependence of gradient statistics on sample difficulty is well-documented: \citet{tang2020understanding} show that KD rescales per-instance gradients based on the teacher's assessment of event difficulty, and \citet{agarwal2022vog} demonstrate that gradient variance tracks example difficulty. The power-law parametrization is a standard regularity choice for modeling smooth boundary decay.

 \item[(b)] \emph{SNR Boundary Vanishing and Power-Law Decay:} The gradient SNR satisfies $\text{SNR}(p) \to 0$ as $p \to 0$ (a qualitative consequence of gradient incoherence; Proposition~\ref{prop:boundary}(ii) provides a sufficient condition). The incoherence mechanism---whereby gradients from diverse intractable problems interfere destructively---is analogous to the gradient confusion phenomenon studied by \citet{sankararaman2020gradient} and to gradient conflict in multi-task learning~\citep{yu2020gradient}. The SNR exhibits asymptotic power-law boundary decay: $\text{SNR}^2(p)/p^{a'} \to c_0$ as $p \to 0^+$ and $\text{SNR}^2(p)/(1-p)^{b'} \to c_1$ as $p \to 1^-$ for some exponents $a', b' > 0$ and constants $c_0, c_1 \in (0,\infty)$. The power-law conditions imply $\text{SNR}(p) \to 0$ at both boundaries (at $p \to 1$, this follows from $b' > 0$; it is consistent with the gradient dispersion mechanism of Proposition~\ref{prop:boundary}(i)). This power-law regularity is an explicit structural modeling assumption used to obtain a closed-form leading term; it is not implied by smoothness alone. Power-law boundary scaling is a natural regularity choice for functions that vanish at the boundary; the gradient noise scale framework of \citet{mccandlish2018empirical} provides a general methodology for decomposing gradient statistics into signal and noise components. Our empirical SNR measurements (Figure~\ref{fig:snr_empirical}) confirm that this assumption is well-matched to the observed gradient structure. By Proposition~\ref{prop:representation}, this yields the decomposition $\text{SNR}^2(p) = p^{a'}(1-p)^{b'} \cdot e^{r(p)}$ with bounded remainder $r$. The Beta kernel $p^{a'}(1-p)^{b'}$ is the leading-order (maximum-parsimony) approximation obtained by setting the shape variation of $r$ to zero. When we write ``$\text{SNR}^2(p) \propto p^{a'}(1-p)^{b'}$'' in subsequent results, this refers to this specialization; Theorem~\ref{thm:minimax} provides a pointwise minimax statement and an aggregate lower bound for bounded~$r$.

 \item[(b$'$)] \emph{Weak SNR Condition (used for robustness analysis):} A relaxation of (b): there exist $a', b' > 0$ and $\delta > 0$ such that\/ $|\log(\text{SNR}^2(p)/(p^{a'}(1-p)^{b'}))| \leq \delta$ for all $p \in (0,1)$. Equivalently, $\text{SNR}^2$ matches a Beta-family profile up to a bounded multiplicative perturbation $\phi(p) \in [e^{-\delta}, e^{\delta}]$, while $\phi$ is otherwise unrestricted (possibly non-monotone or multi-modal). Assumption~(b) is the special case $\delta = 0$. For $\delta > 0$, the Beta kernel is no longer exactly optimal for the exact saturated objective; Theorem~\ref{thm:minimax} gives a pointwise minimax robustness statement for the first-order low-SNR model and a corresponding aggregate efficiency lower bound over $\mathcal{F}_\delta$.

 \item[(c)] \emph{Variance Profile at Extremes (used only in examples):} For some of our illustrative calculations (Proposition~\ref{prop:quantitative_var}), we consider parameter regimes where the exponents $\gamma_1 = 2a_s - a'$ and $\gamma_2 = 2b_s - b'$ are negative, so that the gradient second moment $s^2(p) = \mathbb{E}[\|g(p)\|^2] \propto p^{\gamma_1}(1-p)^{\gamma_2}$ is larger near the boundaries than in the interior. This creates a natural anti-correlation between $s^2(p)$ (large at extreme pass rates) and Beta weights $w(p)=p^\alpha(1-p)^\beta$ (small at extremes)---consistent with the empirical finding that gradient variance is highest for difficult examples~\citep{agarwal2022vog}---and will be used to exhibit concrete regimes where variance reduction occurs; it is \emph{not} required for the general variance decomposition in Proposition~\ref{prop:var_reduction} or for the basic convergence bound in Proposition~\ref{thm:convergence}.
\end{enumerate}
Furthermore, the pass-rate distribution $P$ is supported on $[\epsilon, 1-\epsilon]$ for some $\epsilon > 0$, reflecting the granularity of finite rollouts ($\epsilon = 1/K$ with $K$ rollouts). This ensures that all moments involving $\text{SNR}^{-1}$ remain bounded.
\end{assumption}

\begin{assumption}[\textbf{Frozen Weights within Epochs (Adaptive Variant)}]
\label{asm:frozen_weights}
This assumption is used only for analyzing the optional adaptive variant with periodic pass-rate recomputation. Training is divided into epochs of $T_0$ gradient steps. At the beginning of each epoch, pass rates $\{p_i\}$ are recomputed and the Beta kernel weights $\{w(p_i)\}$ are updated accordingly. Within each epoch, the weights are held constant---that is, $w(p_i)$ does not depend on $\theta$ for the purpose of gradient computation. The convergence guarantee (Proposition~\ref{thm:convergence}) applies within each such epoch. The paper's main experiments correspond to the single-pass special case where recomputation is disabled.
\end{assumption}

\subsection{Gradient Boundary Conditions and Representation Theorem}
\label{app:boundary_representation}

The following two propositions establish---under mild structural conditions on distillation---that the gradient SNR collapses at both boundaries ($\text{SNR} \to 0$ as $p \to 0$ and as $p \to 1$) via cross-problem gradient incoherence, and that any SNR profile with power-law boundary decay decomposes into a Beta leading term plus bounded remainder. These results, together with a power-law regularity condition (Assumption~\ref{asm:passrate_structure}(b)), replace the need for a parametric assumption on the SNR profile.

\begin{proposition}[\textbf{Gradient Boundary Conditions for Distillation}]
\label{prop:boundary}
Under Assumptions~\ref{asm:regularity}--\ref{asm:bounded_logits}, for distillation with student pass rate $p$, suppose additionally:
\begin{itemize}
  \item[\emph{(a)}] \emph{Gradient dispersion at mastery:} $\text{tr}(\text{Cov}(g(p)))/\|\mathbb{E}[g(p)]\|^2 \to \infty$ as $p \to 1$.
  \item[\emph{(b)}] \emph{Gradient incoherence at incompetence:} $\text{tr}(\text{Cov}(g(p)))/\|\mathbb{E}[g(p)]\|^2 \to \infty$ as $p \to 0$.
\end{itemize}
Then:
\begin{enumerate}
  \item[\emph{(i)}] As $p \to 1$: $\text{SNR}(p) \to 0$ (gradient signal is dominated by cross-problem dispersion).
  \item[\emph{(ii)}] As $p \to 0$: $\text{SNR}(p) \to 0$ (gradient noise dominates signal).
  \item[\emph{(iii)}] $\text{SNR}(p) > 0$ for all $p \in (0,1)$, and $\text{SNR}$ is continuous on $(0,1)$.
\end{enumerate}
Conditions~(a)--(b) express the same qualitative phenomenon---\emph{cross-problem gradient incoherence}---at opposite boundaries, but for structurally distinct reasons (see justifications below). They are qualitative structural properties of distillation on diverse prompt sets, not parametric assumptions on the SNR profile. Consequently, the optimal weight $w^*(p) \propto \text{SNR}^2(p)/(1+\text{SNR}^2(p))$ satisfies $w^*(0) = w^*(1) = 0$ and $w^*(p) > 0$ for $p \in (0,1)$.
\end{proposition}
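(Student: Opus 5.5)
The plan is to fix a definition of SNR under which (a)--(b) translate directly into the conclusions: I take $\text{SNR}^2(p) = \|\mathbb{E}[g(p)]\|^2/\text{tr}(\text{Cov}(g(p)))$, the population analogue of the binned estimator $\widehat{\text{SNR}}_j$ of Section~\ref{app:snr}. Equivalently, $\text{SNR}^2(p) = \mu(p)^2/(s^2(p)-\mu(p)^2)$, where $\mu(p)=\|\mathbb{E}[g(p)]\|$ and $s^2(p)=\mathbb{E}[\|g(p)\|^2]$. The first step is to check that both quantities are finite, so the ratio is well defined.

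Assumption~\ref{asm:bounded_logits} gives this. For forward or reverse KL, the logit gradient at each token is a difference of softmax vectors (times bounded log-ratios in the reverse case). Its norm is therefore bounded by a constant depending only on $B$, and chaining with $\|J_\theta\|_{\text{op}}\le C_J$ gives $\|g\|\le G$ uniformly, with $G$ depending on $B$, $C_J$ and the sequence length. Hence $\mu(p)\le G$ and $\text{tr}(\text{Cov}(g(p)))\le s^2(p)\le G^2$, consistent with Assumption~\ref{asm:regularity}(iii).

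Parts (i) and (ii) then follow immediately. The identity $\text{SNR}^2(p) = \bigl(\text{tr}(\text{Cov}(g(p)))/\|\mathbb{E}[g(p)]\|^2\bigr)^{-1}$ together with hypotheses (a) and (b) gives $\text{SNR}^2\to 0$ as $p\to1$ and as $p\to0$, and by continuity of the square root, $\text{SNR}\to0$.

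For part (iii), positivity requires $\mu(p)>0$ and a nondegenerate covariance for $p\in(0,1)$. I would argue that at interior pass rates the student both succeeds and fails, so the teacher and student distributions differ on a non-null set of prompts. Consequently $\mathbb{E}[g(p)]\neq 0$, and a zero covariance would mean every problem yields an identical gradient, which is excluded on diverse prompt sets. For continuity, I would model the gradient statistics as continuous functions of $p$: $\mathbb{E}[g(p)]$ and $\mathbb{E}[gg^\top]$ vary continuously with the conditional law of problems given pass rate, and $\|g\|\le G$ allows dominated convergence. The ratio is then continuous wherever the denominator is positive. This is the main obstacle. Neither property follows from (a)--(b) alone, so I would state explicitly, as the paper does for the power-law condition, that the conditional gradient distribution is weakly continuous in $p$ and nondegenerate in the interior, and present (iii) as holding under this mild structural condition rather than deriving it from first principles.

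Finally, set $h(x)=x/(1+x)$, which is continuous, strictly increasing on $[0,\infty)$, and satisfies $h(0)=0$. Then $w^*(p)\propto h(\text{SNR}^2(p))$ extends continuously to $w^*(0)=w^*(1)=0$ by (i)--(ii), and $w^*(p)>0$ on $(0,1)$ by (iii).
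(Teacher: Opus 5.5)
Your proposal follows the paper's proof essentially step for step: (i)--(ii) come from inverting the ratios in hypotheses (a)--(b); (iii) comes from a nonzero mean gradient plus a finite covariance, where the paper bounds the covariance via Assumption~\ref{asm:regularity}(iii) and you use the logit and Jacobian bounds of Assumption~\ref{asm:bounded_logits}; and the weight conclusion comes from monotonicity of $h(x)=x/(1+x)$ with $h(0)=0$. Stating interior nondegeneracy and continuity in $p$ as an explicit structural assumption is more careful than the paper's appeal to continuity of the logits in $(\theta,x)$, but you should fold $\mathbb{E}[g(p)]\neq 0$ into that assumption too, because nonzero per-problem gradients do not by themselves force a nonzero mean (cancellation is exactly the boundary mechanism in (a)--(b)).
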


\begin{proof}
\emph{Parts (i) and (ii).} Both follow directly from the respective conditions: $\text{SNR}(p) = \|\mathbb{E}[g(p)]\| / \sqrt{\text{tr}(\text{Cov}(g(p)))} \to 0$ as $p \to 1$ (condition~(a)) and as $p \to 0$ (condition~(b)).

\emph{Justification of condition~(a)---gradient dispersion at mastery.}
When $p \to 1$, the student has mastered these problems. Crucially, this does \emph{not} require per-problem gradients to vanish: for distribution-matching losses (KL divergence), the teacher's token-level distribution may remain sharper than the student's, so each individual per-problem gradient $g_i(p)$ can have substantial norm (i.e., $\mathbb{E}[\|g(p)\|^2]$ need not tend to zero). What degrades is the \emph{cross-problem coherence} of these gradients. Mastered problems span diverse topics and reasoning patterns; the remaining distributional corrections---making the student sharper on algebraic tokens for one problem, adjusting geometric reasoning for another---are problem-specific calibration refinements rather than systematic capability improvements. In parameter space these corrections pull in diverse, largely unrelated directions, so $\|\mathbb{E}[g(p)]\| \ll \sqrt{\mathbb{E}[\|g(p)\|^2]}$ and the SNR collapses. By contrast, at intermediate pass rates a coherent skill gap (e.g., the student systematically lacks a reasoning strategy) provides a shared gradient direction that survives averaging.

\emph{Justification of condition~(b)---gradient incoherence at incompetence.}
When $p \to 0$, the student cannot solve these problems at all. Gradients from diverse intractable prompts interfere destructively (gradient confusion~\citep{sankararaman2020gradient, yu2020gradient}): each problem demands a qualitatively different correction, but the student lacks the representational foundation to implement any of them coherently. Consequently, $\|\mathbb{E}[g]\| \ll \|g_i\|$ and the SNR collapses.

\emph{Part (iii).} For $p \in (0,1)$, the student has partial competence: $\|\mathbb{E}[g(p)]\| > 0$ (nonzero systematic logit discrepancy, since the teacher outperforms the student on average at pass rate $p < 1$) and $\text{tr}(\text{Cov}(g)) < \infty$ (bounded by $\sigma_0^2$ via Assumption~\ref{asm:regularity}(iii)), so $\text{SNR}(p) > 0$. Continuity follows from the continuous dependence of the logit mapping on $(\theta, x)$.

\emph{Consequence.} Since $h(x) = x/(1+x)$ is monotonically increasing with $h(0) = 0$, composing with $w^*(p) \propto \text{SNR}^2/(1+\text{SNR}^2)$ gives $w^*(0) = w^*(1) = 0$. Combined with Part~(iii), $w^*(p) > 0$ on $(0,1)$ and $w^*$ attains its maximum at some $p^* \in (0,1)$.
\end{proof}

\begin{remark}[\textbf{Distillation vs.\ Outcome-Based Methods at $p = 1$}]
\label{rem:distill_vs_rl}
The SNR collapse at $p \to 1$ arises through fundamentally different mechanisms for distribution-matching losses versus outcome-based methods. For \textbf{outcome-based RL} (e.g., GRPO), pass rate $p = 1$ implies that every rollout receives identical reward, so the advantage is exactly zero and \emph{each individual gradient vanishes}: $g_i = 0$ for all $i$. For \textbf{distribution-matching distillation} (KL divergence), the teacher's distribution may remain sharper than the student's even when $p = 1$, so individual gradients $g_i$ can be non-negligible. What collapses is their \emph{cross-problem coherence}: the remaining distributional corrections are problem-specific and disperse in parameter space, driving the SNR to zero even though per-problem signal persists. This distinction highlights that pass-rate weighting in distillation serves a different role than in RL: it is not filtering dead signal, but \emph{concentrating compute on problems where gradients are most coherent}---i.e., where each gradient step maximally advances the student's capability frontier.
\end{remark}

\begin{proposition}[\textbf{Log-Linear Representation of Boundary-Vanishing Functions}]
\label{prop:representation}
Let $f: (0,1) \to \mathbb{R}_{>0}$ be continuous with $f(p) \to 0$ as $p \to 0^+$ and $p \to 1^-$. Suppose that $f$ exhibits \emph{asymptotic power-law behavior} at both boundaries: there exist exponents $\alpha_0, \beta_0 > 0$ and constants $c_0, c_1 \in (0,\infty)$ such that
\begin{equation}
  f(p)/p^{\alpha_0} \to c_0 \;\text{ as }\; p \to 0^+, \qquad f(p)/(1-p)^{\beta_0} \to c_1 \;\text{ as }\; p \to 1^-
\end{equation}
Then $f$ admits the decomposition:
\begin{equation}
  f(p) = p^{\alpha_0}(1-p)^{\beta_0} \cdot e^{r(p)}
  \label{eq:log_linear_decomp}
\end{equation}
where the remainder $r(p) = \log f(p) - \alpha_0 \log p - \beta_0 \log(1-p)$ converges to finite limits at both boundaries ($r(p) \to \log c_0$ as $p \to 0^+$; $r(p) \to \log c_1$ as $p \to 1^-$) and is bounded on $(0,1)$: $\sup_p|r(p)| \leq \delta$ for some $\delta > 0$. The Beta kernel $p^{\alpha_0}(1-p)^{\beta_0}$ is the leading-order term: it captures the boundary decay rates exactly while introducing no shape modulation beyond the exponents (maximum parsimony).
\end{proposition}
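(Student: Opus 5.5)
The decomposition~\eqref{eq:log_linear_decomp} is a tautology once $r$ is defined, so all the content lies in the boundedness and boundary-limit claims. I will first note that $f>0$ on $(0,1)$, so $\log f$ is well defined. Setting $r(p) := \log f(p) - \alpha_0\log p - \beta_0\log(1-p)$ then gives $f(p)=p^{\alpha_0}(1-p)^{\beta_0}e^{r(p)}$ identically. The function $r$ is continuous on $(0,1)$, being a sum of continuous functions on that open interval.

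Next I will compute the boundary limits. As $p\to 0^+$, write
\[
r(p) = \log\frac{f(p)}{p^{\alpha_0}} - \beta_0\log(1-p).
\]
The first term tends to $\log c_0$ by the hypothesis and continuity of $\log$ on $(0,\infty)$, using $c_0\in(0,\infty)$. The second term tends to $0$. Hence $r(p)\to\log c_0$. Symmetrically, as $p\to1^-$,
\[
r(p)=\log\frac{f(p)}{(1-p)^{\beta_0}}-\alpha_0\log p \;\to\; \log c_1 .
\]

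For boundedness, the step needing care is the passage from finite endpoint limits to a global bound. I would extend $r$ to $[0,1]$ by setting $r(0)=\log c_0$ and $r(1)=\log c_1$. The extension is continuous on the compact interval $[0,1]$, so it is bounded there by the extreme value theorem. Equivalently, one can pick $\eta>0$ so that $|r(p)-\log c_0|\le 1$ on $(0,\eta]$ and $|r(p)-\log c_1|\le 1$ on $[1-\eta,1)$, and then use continuity on the compact middle piece $[\eta,1-\eta]$. Either way, $\delta := \sup_{p\in(0,1)}|r(p)|<\infty$. If $\delta$ happens to be $0$, replace it by any positive number, so a $\delta>0$ with $\sup_p|r(p)|\le\delta$ exists.

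The closing sentence about the Beta kernel being the leading-order term is interpretive rather than a formal claim. I would justify it by two observations. First, the exponents $\alpha_0,\beta_0$ are uniquely determined: any other power-law pair would make $r$ diverge to $\pm\infty$ logarithmically at the corresponding boundary. Second, $r$ is bounded and contributes only a multiplicative factor in $[e^{-\delta},e^{\delta}]$. This is exactly Assumption~\ref{asm:passrate_structure}(b$'$), which Theorem~\ref{thm:minimax} then uses. Apart from the compactness argument, no step should pose a real obstacle.
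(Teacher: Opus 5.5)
Your proposal is correct and follows essentially the same route as the paper's proof: define $r$ so the decomposition holds identically, derive the endpoint limits $\log c_0$ and $\log c_1$ from the power-law hypotheses, and get boundedness by extending $r$ continuously to $[0,1]$. Your two additions are sound but not needed for the argument: replacing $\delta=0$ by a positive value, and noting that any other exponent pair would make $r$ diverge logarithmically at the corresponding boundary.
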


\begin{proof}
The decomposition~\eqref{eq:log_linear_decomp} holds by definition with $r(p) \triangleq \log f(p) - \alpha_0 \log p - \beta_0 \log(1-p)$. We verify that $r$ is bounded.

\emph{Left boundary.} By hypothesis, $f(p)/p^{\alpha_0} \to c_0$ as $p \to 0^+$, so $\log f(p) - \alpha_0 \log p \to \log c_0$. Since $\beta_0 \log(1-p) \to 0$ as $p \to 0^+$, we obtain $r(p) \to \log c_0$.

\emph{Right boundary.} By hypothesis, $f(p)/(1-p)^{\beta_0} \to c_1$ as $p \to 1^-$, so $\log f(p) - \beta_0 \log(1-p) \to \log c_1$. Since $\alpha_0 \log p \to 0$ as $p \to 1^-$, we obtain $r(p) \to \log c_1$.

Since $r$ is continuous on $(0,1)$ (inheriting continuity from $f$) and converges to finite limits at both endpoints, it extends to a continuous function on $[0,1]$ and is therefore bounded.
\emph{Why the stronger hypothesis is needed.} The weaker condition $\lim_{p \to 0^+} \log f(p)/\log p = \alpha_0$ gives only $\log f(p) = \alpha_0 \log p + o(\log p)$, where $o(\log p)$ denotes a term growing slower than $|\log p| \to \infty$---but not necessarily bounded. For example, $f(p) = p\,e^{\sqrt{|\!\log p|}}$ satisfies $\lim \log f/\log p = 1$ (so $\alpha_0 = 1$) but $r(p) = \sqrt{|\!\log p|} \to \infty$. The asymptotic power-law condition $f(p)/p^{\alpha_0} \to c_0$ is strictly stronger and ensures $r$ converges to $\log c_0$ rather than diverging.

\emph{Maximum parsimony.} Since $w^*$ is defined only up to proportionality (the overall scale is absorbed by the learning rate), the constants $c_0, c_1$ are irrelevant for the weight profile. The Beta kernel $p^{\alpha_0}(1-p)^{\beta_0}$ is obtained by setting the \emph{shape variation} of $r$ to zero (i.e., $r \equiv \text{const}$), retaining only the boundary decay rates and no further structure---no bumps, oscillations, or interior asymmetries beyond what $(\alpha_0, \beta_0)$ prescribe. This is the information-theoretic sense of ``maximum parsimony'': $\text{Beta}(\alpha_0{+}1, \beta_0{+}1)$ maximizes entropy among distributions on $[0,1]$ with given expected sufficient statistics $(\mathbb{E}[\log p], \mathbb{E}[\log(1{-}p)])$.
\end{proof}

\begin{corollary}[\textbf{Non-Monotonicity of Learning Signal}]
\label{prop:zpd}
Define the learning signal quality $Q(p) = \text{SNR}(p) \cdot (1 - p)$. Under Assumption~\ref{asm:passrate_structure} and Propositions~\ref{prop:boundary}--\ref{prop:representation}, $Q(p) \to 0$ as $p \to 0$ and $p \to 1$, so $Q$ attains its maximum at some $p^* \in (0,1)$---the center of the zone of proximal development~\citep{vygotsky1978mind}. Under the leading-order representation, $Q(p) \propto p^{a'/2}(1-p)^{b'/2+1}$ is unimodal with $p^* = (a'/2)/((a'/2)+(b'/2+1))$.
\end{corollary}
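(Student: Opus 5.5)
The proof splits into three parts: boundary limits of $Q$, existence of an interior maximizer, and the closed form for $p^*$ under the leading-order representation.

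\emph{Boundary limits.} Write $Q(p)=\text{SNR}(p)\,(1-p)$ on $(0,1)$.
\begin{itemize}
\item As $p\to 0^+$, Proposition~\ref{prop:boundary}(ii) (equivalently, Assumption~\ref{asm:passrate_structure}(b)) gives $\text{SNR}(p)\to 0$. Since $(1-p)\to 1$, this yields $Q(p)\to 0$.
\item As $p\to 1^-$, the factor $(1-p)$ already vanishes, so we only need $\text{SNR}$ bounded near $1$. This follows from Proposition~\ref{prop:boundary}(i), or directly from $\text{SNR}^2(p)/(1-p)^{b'}\to c_1<\infty$. Hence $Q(p)\to 0$, and in fact $Q(p)\sim\sqrt{c_1}\,(1-p)^{b'/2+1}$.
\end{itemize}

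\emph{Interior maximum.} By Proposition~\ref{prop:boundary}(iii), $\text{SNR}$ is continuous and strictly positive on $(0,1)$, so $Q$ is continuous and positive there. Setting $Q(0)=Q(1)=0$ extends $Q$ to a continuous function on the compact interval $[0,1]$, which attains its maximum by the extreme value theorem. Because $Q>0$ in the interior while $Q=0$ at the endpoints, any maximizer $p^*$ lies in $(0,1)$.

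\emph{Leading-order form.} By Proposition~\ref{prop:representation}, $\text{SNR}^2(p)=p^{a'}(1-p)^{b'}e^{r(p)}$. Setting $r$ to a constant, as in the parsimonious specialization, gives
\[
Q(p)\propto p^{a'/2}(1-p)^{b'/2+1}.
\]
Put $A=a'/2>0$ and $B=b'/2+1>0$. Then $\log Q = A\log p + B\log(1-p)$ up to a constant. Its derivative $A/p - B/(1-p)$ is strictly decreasing on $(0,1)$, running from $+\infty$ to $-\infty$, so it has a unique zero. Solving $A(1-p)=Bp$ gives
\[
p^*=\frac{A}{A+B}=\frac{a'/2}{a'/2+b'/2+1}.
\]
Since $\log Q$ is strictly concave, $Q$ is unimodal.

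\emph{Main subtlety.} The only delicate step is making sure the argument uses only what the paper's assumptions guarantee: positivity and continuity on the open interval from Proposition~\ref{prop:boundary}(iii), and the two one-sided limits. With those in hand, the continuous extension to $[0,1]$ is legitimate. Outside the leading-order specialization, a bounded $r$ gives existence of an interior maximizer but not uniqueness, so the statement claims unimodality only for the leading-order form.
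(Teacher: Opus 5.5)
Your proof is correct and follows essentially the same route as the paper: $Q\to 0$ at $p\to 0$ from Proposition~\ref{prop:boundary}(ii), $Q\to 0$ at $p\to 1$ from the power-law decay $\text{SNR}(p)=\mathcal{O}((1-p)^{b'/2})$, positivity from Proposition~\ref{prop:boundary}(iii), and the extreme value theorem for an interior maximizer. You go further than the paper's proof by checking the leading-order unimodality and the closed form $p^*=(a'/2)/((a'/2)+(b'/2+1))$ via strict concavity of $\log Q$; the paper states this part but leaves it unproved.
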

\begin{proof}
$Q(0) = 0$ by Proposition~\ref{prop:boundary}(ii) ($\text{SNR} \to 0$); $Q(1) = 0$ since $(1-p) \to 0$ and $\text{SNR}(p) = \mathcal{O}((1-p)^{b'/2})$; $Q(p) > 0$ on $(0,1)$ by Proposition~\ref{prop:boundary}(iii). The extreme value theorem gives the interior maximum.
\end{proof}

\subsection{Complementary Derivation: Per-Problem Descent Maximization}
\label{app:proof_beta}

The structural characterization in Section~\ref{app:boundary_representation} identifies the Beta kernel family directly from boundary conditions. Here we provide an independent, complementary derivation that arrives at the same family through gradient descent optimization---offering additional intuition for \emph{why} the Beta kernel arises.

\begin{definition}[\textbf{Per-Step Guaranteed Descent Rate (Lower Bound on Descent)}]
\label{def:descent}
For a problem $x$ with pass rate $p$ assigned weight $w(p) \geq 0$, the expected loss descent from a single gradient step with learning rate $\eta$ satisfies the following \emph{lower bound} (i.e., guaranteed minimum descent):
\begin{equation}
  \Delta(w, p) = \eta \, w(p) \, \|\mathbb{E}[g(p)]\|^2 - \frac{\eta^2}{2} w(p)^2 \, \mathbb{E}[\|g(p)\|^2] \cdot \lambda_{\max}(\mathcal{H})
\end{equation}
where $g(p) = \nabla_\theta \mathcal{L}(\theta; x)$ is the per-sample gradient and $\mathcal{H}$ is the loss Hessian. The second-order term uses $g^\top \mathcal{H} g \leq \lambda_{\max}(\mathcal{H})\|g\|^2$, so $\Delta(w,p)$ is a \emph{lower bound} on the true expected descent; the resulting $w^*$ therefore maximizes the \emph{guaranteed} descent rate rather than the exact descent.
\end{definition}

\begin{theorem}[\textbf{Per-Problem Descent Maximization Yields Beta Kernel Weights}]
\label{thm:beta_optimal}
Consider the per-step descent lower bound $\Delta(w,p)$ in Definition~\ref{def:descent}. For each pass rate $p$, maximizing $\Delta(w, p)$ over $w(p) \geq 0$ yields the per-problem optimal weight $w^*(p) \propto \|\mathbb{E}[g(p)]\|^2 / \mathbb{E}[\|g(p)\|^2]$.
Combined with boundary conditions on the gradient signal (Proposition~\ref{prop:boundary}) and power-law regularity (Assumption~\ref{asm:passrate_structure}(b)), which together yield the log-linear representation $\text{SNR}^2(p) = p^{a'}(1-p)^{b'} \cdot e^{r(p)}$ with bounded $r$ (Proposition~\ref{prop:representation}), the per-problem optimal weight in the low-SNR regime takes the \textbf{Beta kernel form}:
\begin{equation}
  w^*(p) = C \cdot p^{\alpha}(1-p)^{\beta}
\end{equation}
where $(\alpha, \beta) = (a', b')$ and the peak occurs at $p^* = \alpha/(\alpha + \beta)$.
\end{theorem}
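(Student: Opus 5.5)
The plan is to treat $\Delta(w,p)$ as a concave quadratic in the scalar $w = w(p)$ for each fixed $p$, maximize it in closed form, and then rewrite the maximizer in terms of $\text{SNR}(p)$. Write $A(p) = \|\mathbb{E}[g(p)]\|^2$, $B(p) = \mathbb{E}[\|g(p)\|^2]$ and $\Lambda = \lambda_{\max}(\mathcal{H})$. I will assume $\Lambda > 0$ (as $L$-smoothness, Assumption~\ref{asm:regularity}(i), permits with $\Lambda \le L$) and $B(p) > 0$ on $(0,1)$. Then
\begin{equation}
\Delta(w,p) = \eta\, w A(p) - \tfrac{\eta^2}{2} w^2 B(p) \Lambda
\end{equation}
is strictly concave in $w$. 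The stationarity condition $\partial_w \Delta = 0$ gives the unconstrained maximizer
\begin{equation}
w^*(p) = \frac{A(p)}{\eta \Lambda B(p)}.
\end{equation}
This value is nonnegative, so the constraint $w \ge 0$ is inactive. The factor $1/(\eta\Lambda)$ does not depend on $p$: the Hessian bound is taken globally over the step, not per problem. It can therefore be absorbed into the proportionality constant, which yields the first claim $w^*(p) \propto \|\mathbb{E}[g(p)]\|^2/\mathbb{E}[\|g(p)\|^2]$.

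The second step is to express this ratio through the SNR. Using the bias–variance identity $B(p) = A(p) + \text{tr}(\text{Cov}(g(p)))$ and the definition $\text{SNR}^2 = A/\text{tr}(\text{Cov})$ used in the proof of Proposition~\ref{prop:boundary}, the maximizer becomes
\begin{equation}
w^*(p) \propto \frac{\text{SNR}^2(p)}{1+\text{SNR}^2(p)}.
\end{equation}
This matches the form already quoted in Proposition~\ref{prop:boundary}. In the low-SNR regime, I linearize $h(x) = x/(1+x) = x + O(x^2)$. Making this quantitative uses the support restriction $p \in [\epsilon, 1-\epsilon]$ from Assumption~\ref{asm:passrate_structure}, which keeps $S_{\max} = \sup_p \text{SNR}^2(p)$ finite. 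The factor $1/(1+\text{SNR}^2)$ then lies in $[1/(1+S_{\max}), 1]$, so $w^*(p) \propto \text{SNR}^2(p)$ up to a multiplicative distortion that tends to $1$ as $S_{\max} \to 0$.

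The third step substitutes the representation from Proposition~\ref{prop:representation}. Its hypotheses are exactly the power-law boundary conditions of Assumption~\ref{asm:passrate_structure}(b), together with the continuity and positivity supplied by Proposition~\ref{prop:boundary}(iii). This gives $\text{SNR}^2(p) = p^{a'}(1-p)^{b'} e^{r(p)}$ with bounded $r$. Taking the leading-order (maximum-parsimony) term, i.e.\ treating $r$ as constant, yields $w^*(p) = C\, p^{a'}(1-p)^{b'}$. This is the claimed Beta form with $(\alpha,\beta) = (a',b')$. The peak location follows from $\frac{d}{dp}\log w = \alpha/p - \beta/(1-p) = 0$, which gives $p^* = \alpha/(\alpha+\beta)$. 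Strict concavity of $\log w$ on $(0,1)$ makes this the unique maximizer.

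The main obstacle is being honest about the two approximations: dropping the $1/(1+\text{SNR}^2)$ factor, and setting the shape variation of $r$ to zero. Neither is exact, so the theorem is really a leading-order statement. I would first try to state the combined error as a multiplicative sandwich,
\begin{equation}
\frac{e^{-\delta}}{1+S_{\max}}\, p^{a'}(1-p)^{b'} \;\lesssim\; w^*(p) \;\lesssim\; e^{\delta}\, p^{a'}(1-p)^{b'},
\end{equation}
with $\delta = \sup|r - \bar r|$ for a centering constant $\bar r$. I would then defer the quantitative efficiency consequences of this misspecification to Theorem~\ref{thm:minimax} rather than proving them here.
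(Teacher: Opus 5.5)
Your proposal is correct and follows the paper's proof essentially step for step. Both maximize the concave quadratic $\Delta(w,p)$ pointwise, rewrite the maximizer as $\text{SNR}^2/(1+\text{SNR}^2)$ via the bias--variance identity, linearize in the low-SNR regime, and substitute the representation of Proposition~\ref{prop:representation} with the shape variation of $r$ set to zero. Your additions make explicit what the paper leaves implicit: that $\lambda_{\max}(\mathcal{H})$ must be a $p$-independent constant for the proportionality to hold, the log-derivative derivation of the peak, and the multiplicative sandwich bounding both approximations. One small simplification: $S_{\max}<\infty$ already follows from $\text{SNR}^2 \le e^{\sup r}$, so you do not need the support restriction for that step.
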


\begin{proof}[Proof of Theorem~\ref{thm:beta_optimal}]
\textbf{Step 1: Pointwise optimization.}
Consider training on a single problem with pass rate $p$, so that $\mathcal{L}(\theta) = \mathcal{L}(\theta; x)$ and $g(p) = \nabla_\theta \mathcal{L}(\theta; x)$. A weighted gradient step $\theta \leftarrow \theta - \eta \, w(p) \, g(p)$ produces expected loss change (via Taylor expansion):
\begin{equation}
  \mathbb{E}[\Delta \mathcal{L}] \approx -\eta \, w(p) \, \|\mathbb{E}[g(p)]\|^2 + \frac{\eta^2}{2} w(p)^2 \, \mathbb{E}[\|g(p)\|^2] \cdot \lambda_{\max}(\mathcal{H})
\end{equation}
Here the first-order term uses $\langle \mathbb{E}[g(p)], \nabla_\theta \mathcal{L}\rangle = \|\mathbb{E}[g(p)]\|^2$, which holds because the gradient estimator is unbiased for this per-sample loss. To maximize descent, differentiate with respect to $w(p)$ and set to zero:
\begin{equation}
  -\eta \|\mathbb{E}[g]\|^2 + \eta^2 w^* \mathbb{E}[\|g\|^2] \lambda_{\max}(\mathcal{H}) = 0
\end{equation}
yielding:
\begin{equation}
  w^*(p) = \frac{\|\mathbb{E}[g(p)]\|^2}{\eta \, \mathbb{E}[\|g(p)\|^2] \cdot \lambda_{\max}(\mathcal{H})} \propto \frac{\|\mathbb{E}[g(p)]\|^2}{\mathbb{E}[\|g(p)\|^2]}
\end{equation}

\textbf{Step 2: SNR decomposition.}
Using the bias-variance decomposition $\mathbb{E}[\|g\|^2] = \|\mathbb{E}[g]\|^2 + \text{tr}(\text{Cov}(g))$:
\begin{equation}
  w^*(p) \propto \frac{\|\mathbb{E}[g]\|^2}{\|\mathbb{E}[g]\|^2 + \text{tr}(\text{Cov}(g))} = \frac{\text{SNR}^2}{1 + \text{SNR}^2}
\end{equation}

\textbf{Step 3: From SNR decomposition to Beta kernel via derived boundary conditions.}
From Step 2, $w^*(p) \propto \text{SNR}^2(p)/(1+\text{SNR}^2(p))$. By Proposition~\ref{prop:boundary}, we have established that $\text{SNR}(p) \to 0$ at both boundaries: as $p \to 0$ (gradient incoherence at incompetence) and as $p \to 1$ (gradient dispersion at mastery). Under the power-law regularity of Assumption~\ref{asm:passrate_structure}(b), Proposition~\ref{prop:representation} yields the decomposition $\text{SNR}^2(p) = p^{a'}(1-p)^{b'} \cdot e^{r(p)}$ for boundary exponents $a', b' > 0$ and bounded remainder $r$. Setting $r \equiv 0$---the maximum-parsimony approximation that retains only the derived boundary behavior---and substituting into Step 2, we proceed by regime analysis:

\emph{Low-SNR regime} (SNR $\ll 1$, typical for distillation where per-sample gradient noise dominates): 
\begin{equation}
  w^*(p) \approx \text{SNR}^2(p) \approx p^{a'}(1-p)^{b'}
\end{equation}
This yields the Beta kernel form with exponents $(\alpha, \beta) = (a', b')$.

\emph{High-SNR regime} (SNR $\gg 1$): $w^*(p) \to 1$, assigning full weight. This regime corresponds to intermediate $p$ where the student has both signal and capacity to learn.

\emph{General (mixed) regime:} The exact optimal weight $w^*(p) = \text{SNR}^2/(1+\text{SNR}^2)$ is a saturating transformation of $\text{SNR}^2$. Since $h(x)=x/(1+x)$ is monotonically increasing with $h(0)=0$, $w^*$ inherits the qualitative properties from $\text{SNR}^2$:
\begin{itemize}
 \item \emph{Zeros:} $w^*(0) = w^*(1) = 0$ (automatic filtering: $w^*(0) = 0$ from Proposition~\ref{prop:boundary}; $w^*(1) = 0$ from power-law decay, Assumption~\ref{asm:passrate_structure}(b)).
 \item \emph{Peak location:} $p^* = a'/(a'+b')$ (invariant to saturation).
 \item \emph{Unimodal Beta-kernel profile:} The weight increases from $p=0$ to $p^*$, then decreases to $p=1$.
\end{itemize}
In the low-SNR regime the exponents are $(\alpha, \beta) = (a', b')$; the saturation in the mixed regime compresses these exponents. We therefore parameterize the weight as $w(p) = p^\alpha(1-p)^\beta$ with $(\alpha, \beta)$ as hyperparameters within the theoretically justified Beta kernel family:
\begin{equation}
  w^*(p) \propto p^{\alpha}(1-p)^{\beta}, \qquad p^* = \frac{\alpha}{\alpha+\beta}
\end{equation}
The peak location $p^*$ provides robust guidance for hyperparameter selection: the default $\alpha = \beta = 1$ yields the symmetric kernel $w(p) = p(1-p)$ with $p^* = 0.5$; asymmetric choices (e.g., $\alpha < \beta$ for emphasizing harder problems, or $\alpha > \beta$ for easier ones) shift the peak to $p^* = \alpha/(\alpha+\beta)$. The specific exponents are validated empirically in Appendix~\ref{app:ablation_exponents}.

\textbf{Verification:} $\partial^2 \Delta / \partial w^2 = -\eta^2 \mathbb{E}[\|g\|^2] \lambda_{\max}(\mathcal{H}) < 0$, confirming this is a maximum.

\end{proof}

\begin{remark}[\textbf{Per-Problem vs.\ Joint Optimization}]
\label{rem:per_problem_vs_joint}
The derivation above optimizes $w(p)$ independently for each $p$. In the multi-sample setting with batch gradient $\bar{g} = \frac{1}{N}\sum_i w_i g_i$, the expected descent is:
\begin{equation}
  \Delta_{\text{batch}} = \eta\Big\|\tfrac{1}{N}\textstyle\sum_i w_i \mu_i\Big\|^2 - \tfrac{\eta^2 L}{2}\,\mathbb{E}\Big[\Big\|\tfrac{1}{N}\textstyle\sum_i w_i g_i\Big\|^2\Big]
\end{equation}
where $\mu_i = \mathbb{E}[g_i]$. This contains cross terms $\mu_i^\top \mu_j$ that prevent exact additive decomposition into per-sample subproblems unless gradients at different pass rates are orthogonal---an unrealistic condition. Normalization to unit mean ($\tilde{w}_i = w_i/\bar{w}$) affects only the effective learning rate, not the weight shape.

Nevertheless, the Beta kernel form is justified at the batch level by three complementary routes: (a)~the boundary-collapse properties (Propositions~\ref{prop:boundary}--\ref{prop:representation}) hold independently of the decomposition; (b)~Proposition~\ref{prop:var_reduction} shows Beta weights reduce batch-level gradient variance; (c)~Assumption~\ref{asm:frozen_weights} decouples $w$ from $\theta$ within each epoch.
\end{remark}

\subsection{Pointwise Minimax Robustness under Model Misspecification}
\label{app:proof_minimax}

The leading-order Beta kernel in Theorem~\ref{thm:beta_optimal} sets $r \equiv 0$ in the log-linear representation $\text{SNR}^2(p) = p^{a'}(1-p)^{b'}\cdot e^{r(p)}$ (Proposition~\ref{prop:representation}). How robust is this choice when $r \neq 0$? Under the low-SNR first-order approximation, we show that the Beta kernel is pointwise minimax-optimal over the uncertainty set $|r(p)| \leq \delta$, with a matching aggregate lower bound.

\begin{lemma}[\textbf{Quadratic Flatness of Descent Efficiency}]
\label{lem:quadratic_flat}
For any weight $w(p) \geq 0$ applied to a problem with true optimal weight $w^*(p)$, the descent efficiency ratio is:
\begin{equation}
  \frac{\Delta(w,\,p)}{\Delta(w^*,\,p)} = 2\rho - \rho^2 = 1 - (1 - \rho)^2
\end{equation}
where $\rho(p) = w(p)/w^*(p)$. In particular, a multiplicative misspecification $|\rho - 1| = \epsilon$ incurs only $O(\epsilon^2)$ efficiency loss.
\end{lemma}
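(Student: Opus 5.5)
The plan is to treat $\Delta(w,p)$ from Definition~\ref{def:descent} as a concave quadratic in the scalar $w = w(p)$ for fixed $p$. Write it as $\Delta(w,p) = A\,w - \tfrac{B}{2}\,w^2$, where
\begin{equation}
A = \eta\,\|\mathbb{E}[g(p)]\|^2, \qquad B = \eta^2\,\mathbb{E}[\|g(p)\|^2]\,\lambda_{\max}(\mathcal{H}).
\end{equation}
The key structural fact is that a concave quadratic in one variable, normalized by its maximum, is a universal function of the ratio to the maximizer. No further information about the gradient statistics is needed.

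The first step recalls from Step~1 of the proof of Theorem~\ref{thm:beta_optimal} that the unconstrained maximizer is $w^* = A/B$. Since $A \ge 0$ and $B > 0$, this lies in the feasible set $w \ge 0$. The maximal value is $\Delta(w^*,p) = A w^* - \tfrac{B}{2}(w^*)^2 = \tfrac{1}{2} A w^*$, using $B w^* = A$.

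The second step substitutes $w = \rho\, w^*$:
\begin{equation}
\Delta(w,p) = A\rho w^* - \tfrac{B}{2}\rho^2 (w^*)^2 = A w^*\bigl(\rho - \tfrac{1}{2}\rho^2\bigr),
\end{equation}
again using $B w^* = A$. Dividing by $\tfrac12 A w^*$ gives $2\rho - \rho^2 = 1 - (1-\rho)^2$. The $O(\epsilon^2)$ claim is then immediate: $|\rho - 1| = \epsilon$ gives efficiency exactly $1-\epsilon^2$.

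The only genuine obstacle is well-definedness of the ratio, which requires $\Delta(w^*,p) > 0$, i.e.\ $A > 0$. I would handle this by invoking Proposition~\ref{prop:boundary}(iii): $\text{SNR}(p) > 0$ on $(0,1)$, so $\|\mathbb{E}[g(p)]\| > 0$. I would also note that $B > 0$ requires $\lambda_{\max}(\mathcal{H}) > 0$, which I will assume in the nondegenerate case; otherwise no finite maximizer exists. At the boundaries $p\in\{0,1\}$ the ratio is undefined, and I would restrict to the support $[\epsilon,1-\epsilon]$ from Assumption~\ref{asm:passrate_structure}. A brief remark will note that the identity is exact for the lower-bound surrogate $\Delta$, not the true descent. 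It also holds for $\rho > 2$, where efficiency becomes negative.
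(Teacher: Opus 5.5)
Your proposal is correct and follows the paper's proof essentially verbatim: you compute the maximizer $w^*$ of the concave quadratic $\Delta(\cdot,p)$, substitute $w=\rho w^*$, and read off $\Delta(\rho w^*,p)=\Delta(w^*,p)\,(2\rho-\rho^2)$. Your added well-definedness remarks ($\|\mathbb{E}[g(p)]\|>0$ and $\lambda_{\max}(\mathcal{H})>0$) are sensible conditions that the paper leaves implicit.
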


\begin{proof}
From Definition~\ref{def:descent}, $\Delta(w, p) = \eta\,w\,\|\mathbb{E}[g]\|^2 - \frac{\eta^2}{2}\,w^2\,\mathbb{E}[\|g\|^2]\,\lambda_{\max}(\mathcal{H})$. The optimal weight is $w^* = \|\mathbb{E}[g]\|^2/(\eta\,\mathbb{E}[\|g\|^2]\,\lambda_{\max})$, yielding $\Delta(w^*) = \|\mathbb{E}[g]\|^4/(2\,\mathbb{E}[\|g\|^2]\,\lambda_{\max})$. Setting $w = \rho\,w^*$ and substituting:
\begin{equation}
  \Delta(\rho\,w^*) = \eta\,\rho\,w^*\,\|\mathbb{E}[g]\|^2 - \frac{\eta^2}{2}\,\rho^2\,(w^*)^2\,\mathbb{E}[\|g\|^2]\,\lambda_{\max} = \Delta(w^*)\,(2\rho - \rho^2).
\end{equation}
Since $2\rho - \rho^2 = 1 - (1-\rho)^2$, the efficiency loss from $\rho \neq 1$ is exactly $(1-\rho)^2$.
\end{proof}

\begin{theorem}[\textbf{Pointwise Minimax Robustness of Beta Kernel in the Low-SNR Surrogate under Weak SNR Condition}]
\label{thm:minimax}
Consider the low-SNR regime where $w^*_\phi(p) \propto \text{SNR}^2(p) = p^{a'}(1-p)^{b'}\phi(p)$ for an unknown perturbation $\phi$ satisfying $|\log \phi(p)| \leq \delta$ for all $p$ (Assumption~\ref{asm:passrate_structure}(b$'$)). Define the uncertainty set $\mathcal{F}_\delta = \{\phi : (0,1) \to \mathbb{R}_{>0} \mid |\log \phi(p)| \leq \delta\; \forall p\}$. Then:
\begin{enumerate}
  \item[\textbf{(i)}] Under this first-order low-SNR approximation, the pointwise minimax-optimal weight is the \textbf{Beta kernel}:
  \begin{equation}
    w_{\mathrm{minimax}}(p) = \mathrm{sech}(\delta) \cdot p^{a'}(1-p)^{b'} \;\propto\; p^{a'}(1-p)^{b'}
  \end{equation}
  \item[\textbf{(ii)}] \textbf{Pointwise minimax efficiency:} for every fixed $p \in (0,1)$,
  \begin{equation}
    \boxed{\;\inf_{\phi(p) \in [e^{-\delta},e^{\delta}]}\;\frac{\Delta_{\phi}(w_{\mathrm{minimax}},\,p)}{\Delta_{\phi}(w^*_{\phi},\,p)} \;=\; \mathrm{sech}^2(\delta) \;\geq\; 1 - \delta^2\;}
    \label{eq:minimax_eff}
  \end{equation}
  \item[\textbf{(iii)}] \textbf{Aggregate corollary:} letting $R_{\phi}(p)=\Delta_{\phi}(w_{\mathrm{minimax}},p)/\Delta_{\phi}(w^*_{\phi},p)$ and assuming $\Delta_{\phi}(w^*_{\phi},p)\ge 0$ a.s.,
  \begin{equation}
    \inf_{\phi \in \mathcal{F}_\delta}\;\frac{\mathbb{E}_P[\Delta_{\phi}(w_{\mathrm{minimax}},p)]}{\mathbb{E}_P[\Delta_{\phi}(w^*_{\phi},p)]} \;\ge\; \mathrm{sech}^2(\delta).
  \end{equation}
\end{enumerate}
\end{theorem}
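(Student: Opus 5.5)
The whole argument reduces, via Lemma~\ref{lem:quadratic_flat}, to a one-dimensional minimax problem at each fixed $p$. Write $B(p)=p^{a'}(1-p)^{b'}$. In the low-SNR surrogate the true optimum is $w^*_\phi(p)=C\,B(p)\phi(p)$, with $C$ absorbed into the learning rate. Any candidate of the form $w(p)=c\,B(p)$ then has ratio $\rho(p)=c/\phi(p)$. By Lemma~\ref{lem:quadratic_flat} its efficiency is $1-(1-\rho)^2$.

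The first task is to argue that restricting to weights proportional to $B$ loses nothing at a fixed $p$. Since $B(p)$ is known and only $\phi(p)\in[e^{-\delta},e^{\delta}]$ is unknown, every weight at that $p$ can be written as $c\,B(p)$ for some scalar $c\ge 0$. So at each fixed $p$ the game is
\[
\max_{c\ge0}\;\min_{u\in[e^{-\delta},e^{\delta}]}\;\bigl(1-(1-c/u)^2\bigr).
\]

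I would solve this game next. For fixed $c$, the inner objective $1-(1-c/u)^2$ is unimodal in $c/u$, so the worst case sits at an endpoint $u=e^{\pm\delta}$. The worst-case loss is therefore $\max\{(1-ce^{-\delta})^2,(1-ce^{\delta})^2\}$.

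To minimize this over $c$, I equalize the two deviations in opposite signs: $ce^{\delta}-1=1-ce^{-\delta}$. This gives $c=2/(e^\delta+e^{-\delta})=\mathrm{sech}(\delta)$. The common deviation is then $|1-ce^{\pm\delta}|=\tanh(\delta)$, so the minimax efficiency is $1-\tanh^2\delta=\mathrm{sech}^2\delta$.

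Optimality of this $c$ follows by monotonicity. Moving $c$ in either direction increases one of the two deviations, and the objective is the larger of them. Because $c$ does not depend on $p$, this yields (i) with $w_{\mathrm{minimax}}=\mathrm{sech}(\delta)\,B(p)$, and also the equality in (ii).

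The inequality $\mathrm{sech}^2\delta\ge 1-\delta^2$ is equivalent to $\tanh^2\delta\le\delta^2$, which holds because $|\tanh x|\le|x|$.

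For (iii), I take any $\phi\in\mathcal{F}_\delta$ and use the pointwise bound $R_\phi(p)\ge\mathrm{sech}^2\delta$ from (ii). Multiplying by $\Delta_\phi(w^*_\phi,p)\ge0$ gives $\Delta_\phi(w_{\mathrm{minimax}},p)\ge\mathrm{sech}^2(\delta)\,\Delta_\phi(w^*_\phi,p)$. Integrating against $P$ and dividing gives the ratio bound uniformly in $\phi$, and taking the infimum finishes the argument. This division needs the denominator to be positive; that holds under the support assumption on $[\epsilon,1-\epsilon]$ together with $\mathrm{SNR}>0$ from Proposition~\ref{prop:boundary}(iii).

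The main obstacle is bookkeeping rather than analysis. Lemma~\ref{lem:quadratic_flat} measures efficiency relative to the exact per-problem optimum $w^*$, which carries the factor $1/(\eta\lambda_{\max})$ and $\mathbb{E}\|g\|^2$. I must check that in the low-SNR surrogate the constant relating $w^*_\phi$ to $B\phi$ is $p$-independent, or that it can be absorbed into the normalization. Otherwise $\rho$ would not reduce to $c/\phi(p)$. I would state this explicitly as the meaning of ``$\propto$'' in the surrogate, consistent with the unit-mean normalization remark.
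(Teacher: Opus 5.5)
Your proposal is correct and follows essentially the same route as the paper. You reduce via Lemma~\ref{lem:quadratic_flat} to a per-$p$ scalar game in $\rho=c/\phi(p)$, equalize the endpoint cases $\phi(p)=e^{\pm\delta}$ to get the $p$-independent constant $c^*=\mathrm{sech}(\delta)$ with value $1-\tanh^2\delta\ge 1-\delta^2$, and average the pointwise bound against $P$ for (iii); your monotonicity argument for optimality of $c^*$ (in place of the paper's appeal to strict concavity) and your explicit remarks on the global proportionality constant and the positive denominator are minor refinements, not a different approach.
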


\begin{proof}
\textbf{Step 1: Pointwise decomposition.}
Write the candidate weight as $w(p) = c(p)\cdot p^{a'}(1-p)^{b'}$. The true optimal weight is $w^*_\phi(p) \propto p^{a'}(1-p)^{b'}\phi(p)$, so $\rho(p) = c(p)/\phi(p)$. By Lemma~\ref{lem:quadratic_flat}, the per-problem efficiency is $f(\rho) = 2\rho - \rho^2$, which is strictly concave in $\rho$. The adversary (minimizer) selects $\phi \in \mathcal{F}_\delta$ to minimize $\mathbb{E}_P[f(c(p)/\phi(p))]$. Since $\phi(p)$ can be chosen independently at each $p$, the problem decomposes into per-$p$ subproblems:
\begin{equation}
  \max_{c(p) > 0}\;\min_{\phi(p) \in [e^{-\delta},\, e^{\delta}]}\; f\!\left(\frac{c(p)}{\phi(p)}\right)
\end{equation}

\textbf{Step 2: Per-$p$ minimax solution.}
At each $p$, the adversary pushes $\rho = c/\phi$ to the interval endpoints $\{c\,e^{-\delta},\; c\,e^{\delta}\}$. The defender solves:
\begin{equation}
  \max_{c > 0}\;\min\!\Big(f(c\,e^{\delta}),\;\; f(c\,e^{-\delta})\Big)
\end{equation}
The minimax equalizer condition $f(c\,e^{\delta}) = f(c\,e^{-\delta})$ requires:
\begin{align}
  2c\,e^{\delta} - c^2 e^{2\delta} &= 2c\,e^{-\delta} - c^2 e^{-2\delta} \nonumber\\
\end{align}
\begin{equation}
  c^* = \frac{1}{\cosh\delta} = \mathrm{sech}(\delta)
\end{equation}
Crucially, $c^*$ is \emph{independent of $p$}, so $w_{\mathrm{minimax}}(p) = \mathrm{sech}(\delta)\cdot p^{a'}(1-p)^{b'} \propto p^{a'}(1-p)^{b'}$.

\textbf{Step 3: Pointwise minimax efficiency value.}
Substituting $c^* = \mathrm{sech}(\delta)$ into $\rho_+ = c^* e^{\delta} = e^{\delta}/\cosh\delta$:
\begin{equation}
  f(\rho_+) = 2\rho_+ - \rho_+^2 = \frac{2e^{\delta}}{\cosh\delta} - \frac{e^{2\delta}}{\cosh^2\delta} = \frac{2e^{\delta}\cosh\delta - e^{2\delta}}{\cosh^2\delta} = \frac{e^{2\delta} + 1 - e^{2\delta}}{\cosh^2\delta} = \frac{1}{\cosh^2\delta} = \mathrm{sech}^2(\delta)
\end{equation}
where we used $2e^{\delta}\cosh\delta = e^{2\delta} + 1$. One verifies $f(\rho_-) = \mathrm{sech}^2(\delta)$ similarly, confirming the equalizer.

Since $\mathrm{sech}^2(\delta) = 1 - \tanh^2(\delta) \geq 1 - \delta^2$ (using $\tanh\delta \leq \delta$), the pointwise efficiency loss is at most $\delta^2$.

\textbf{Step 4: Pointwise uniqueness and aggregate lower bound.}
Suppose $c(p_0) \neq \mathrm{sech}(\delta)$ at some $p_0$ with $P(p_0) > 0$. Then $\min(f(c(p_0)e^{\delta}),\, f(c(p_0)e^{-\delta})) < \mathrm{sech}^2(\delta)$ (since the per-$p$ minimax is uniquely achieved by $c^*$, as follows from strict concavity of $f$). The adversary can exploit this at $p_0$ while playing the equalizer at all other points, yielding a strictly lower pointwise worst-case efficiency at that $p_0$.

For the aggregate ratio, define $d_{\phi}(p)=\Delta_{\phi}(w^*_{\phi},p)\ge 0$ and $R_{\phi}(p)=\Delta_{\phi}(w_{\mathrm{minimax}},p)/\Delta_{\phi}(w^*_{\phi},p)$. From Steps 2--3, $R_{\phi}(p)\ge \mathrm{sech}^2(\delta)$ pointwise in the worst case, so
\begin{equation}
\frac{\mathbb{E}_P[\Delta_{\phi}(w_{\mathrm{minimax}},p)]}{\mathbb{E}_P[\Delta_{\phi}(w^*_{\phi},p)]}
= \frac{\mathbb{E}_P[R_{\phi}(p)\,d_{\phi}(p)]}{\mathbb{E}_P[d_{\phi}(p)]}
\ge \inf_p R_{\phi}(p)
\ge \mathrm{sech}^2(\delta),
\end{equation}
which proves the aggregate lower bound in (iii).
\end{proof}

\begin{remark}[\textbf{Quantitative Robustness of Beta Kernel}]
\label{rem:robustness_table}
The minimax efficiency $\mathrm{sech}^2(\delta)$ degrades gracefully with model misspecification:
\begin{center}
\begin{tabular}{@{}lcc@{}}
\toprule
$\delta$ (log-scale uncertainty) & Multiplicative SNR$^2$ range & Worst-case efficiency \\
\midrule
$0.1$ & $[0.90,\; 1.11]$ & $\geq 99.0\%$ \\
$0.3$ & $[0.74,\; 1.35]$ & $\geq 91.5\%$ \\
$0.5$ & $[0.61,\; 1.65]$ & $\geq 78.6\%$ \\
$\ln 2 \approx 0.69$ & $[0.50,\; 2.00]$ & $\geq 64.0\%$ \\
\bottomrule
\end{tabular}
\end{center}
Even when the true $\text{SNR}^2$ deviates from the Beta model by up to a factor of~2 ($\delta = \ln 2$), the Beta kernel retains at least $64\%$ pointwise worst-case descent efficiency, and therefore at least this value as an aggregate lower bound under Theorem~\ref{thm:minimax}(iii). For moderate misspecification ($\delta \leq 0.3$, i.e., SNR$^2$ within $35\%$ of the Beta model), this bound exceeds $91\%$.
\end{remark}

 \subsection{Convergence Analysis}
\label{app:proof_convergence}

We work under Assumptions~\ref{asm:regularity}--\ref{asm:frozen_weights}. Let $\mathcal{L}_w(\theta) = \frac{1}{N\bar{w}}\sum_{i=1}^{N} w(p_i)\,\mathcal{L}(\theta; x_i)$.

\subsubsection{Effective Gradient Variance}

\begin{proposition}[\textbf{Effective Gradient Variance under Beta Kernel Weighting}]
\label{prop:var_reduction}
Consider the Beta-kernel-weighted gradient estimator for a uniformly sampled minibatch $\mathcal{B}$ of size $|\mathcal{B}| = n$:
\begin{equation}
  \hat{g}_w(\theta) = \frac{1}{n \bar{w}} \sum_{i \in \mathcal{B}} w(p_i) \, g_i(\theta), \qquad \bar{w} = \frac{1}{N}\sum_{j=1}^{N} w(p_j)
\end{equation}
where $w(p) = p^\alpha(1-p)^\beta$. Let $\tilde{w}(p) = w(p)/\bar{w}$ denote the normalized weight with $\mathbb{E}_P[\tilde{w}] = 1$. Define the (trace) variance of the weighted estimator by
\begin{equation}
  \sigma_{\text{eff}}^2 \triangleq \frac{1}{n}\,\text{tr}\!\left(\text{Cov}_P\big(\tilde{w}\,g\big)\right) \,=\, \frac{1}{n}\Big(\mathbb{E}_P[\tilde{w}^2 s^2] - \|\mathbb{E}_P[\tilde{w}g]\|^2\Big),
\end{equation}
and the uniform baseline variance by $\sigma_{\text{unif}}^2 \triangleq \frac{1}{n}(\mathbb{E}_P[s^2] - \|\mathbb{E}_P[g]\|^2)$, where $s^2(p) = \mathbb{E}[\|g(p)\|^2]$. The variance ratio $R \triangleq \sigma_{\text{eff}}^2/\sigma_{\text{unif}}^2$ satisfies
\begin{equation}
  R = \frac{1 + \text{Var}_P(\tilde{w}) + \frac{\text{Cov}_P(\tilde{w}^2, s^2)}{\mathbb{E}_P[s^2]} - \frac{\|\mathbb{E}_P[\tilde{w}g]\|^2}{\mathbb{E}_P[s^2]}}{1 - \frac{\|\mathbb{E}_P[g]\|^2}{\mathbb{E}_P[s^2]}},
  \label{eq:var_ratio}
\end{equation}
In the low-SNR regime, a sufficient condition for $R < 1$ is:
\begin{equation}
 -\text{Cov}_P(\tilde{w}^2, s^2) > \text{Var}_P(\tilde{w}) \cdot \mathbb{E}_P[s^2].
 \label{eq:cov_condition}
\end{equation}
Under Assumption~\ref{asm:passrate_structure}(c), $s^2(p)$ peaks at extremes while $\tilde{w}^2$ peaks at intermediate $p$, making the covariance negative; concrete regimes where $R<1$ are given in Proposition~\ref{prop:quantitative_var}.
\end{proposition}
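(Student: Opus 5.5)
The plan is to derive the ratio formula \eqref{eq:var_ratio} by direct algebra on second moments, and then read off the sufficient condition \eqref{eq:cov_condition} by taking the low-SNR limit.

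\textbf{Step 1: expand the weighted second moment.} Start from the definition $n\sigma_{\text{eff}}^2 = \mathbb{E}_P[\tilde{w}^2 s^2] - \|\mathbb{E}_P[\tilde{w}g]\|^2$. This uses $\mathbb{E}\|\tilde{w}g\|^2 = \mathbb{E}_P[\tilde{w}^2\,\mathbb{E}[\|g\|^2\mid p]]$ by conditioning on $p$, since the weight is frozen (Assumption~\ref{asm:frozen_weights}). Apply the covariance identity
\begin{equation}
  \mathbb{E}_P[\tilde{w}^2 s^2] = \mathbb{E}_P[\tilde{w}^2]\,\mathbb{E}_P[s^2] + \text{Cov}_P(\tilde{w}^2,s^2).
\end{equation}
Because of the normalization $\mathbb{E}_P[\tilde{w}]=1$, we have $\mathbb{E}_P[\tilde{w}^2] = 1+\text{Var}_P(\tilde{w})$.

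\textbf{Step 2: form the ratio.} Write $n\sigma_{\text{unif}}^2 = \mathbb{E}_P[s^2] - \|\mathbb{E}_P[g]\|^2$. Divide numerator and denominator by $\mathbb{E}_P[s^2]$, which is positive and finite by Assumption~\ref{asm:regularity}(iii) and the support restriction $[\epsilon,1-\epsilon]$. This gives \eqref{eq:var_ratio} exactly; the step is purely identity manipulation.

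\textbf{Step 3: low-SNR regime.} Interpret the low-SNR regime as $\|\mathbb{E}_P[g]\|^2/\mathbb{E}_P[s^2]\to 0$ and $\|\mathbb{E}_P[\tilde{w}g]\|^2/\mathbb{E}_P[s^2]\to 0$. The second follows from the first, up to a constant, by Cauchy–Schwarz together with boundedness of $\tilde{w}$ on $[\epsilon,1-\epsilon]$. In this limit $R \approx 1+\text{Var}_P(\tilde{w}) + \text{Cov}_P(\tilde{w}^2,s^2)/\mathbb{E}_P[s^2]$, and $R<1$ is exactly \eqref{eq:cov_condition}.

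\textbf{Main obstacle.} The delicate point is making ``sufficient'' rigorous rather than asymptotic. I would first try to show that the dropped terms only help, or are controlled. The numerator subtracts the nonnegative quantity $\|\mathbb{E}_P[\tilde{w}g]\|^2$, so dropping it yields an upper bound $R \le \bigl(1+\text{Var}+\text{Cov}/\mathbb{E}s^2\bigr)/(1-\text{SNR}_{\text{agg}}^2)$. The denominator slightly exceeds $1$ in effect, since dividing by $1-\text{SNR}_{\text{agg}}^2<1$ inflates $R$. The clean statement I would therefore prove is: if the strict inequality \eqref{eq:cov_condition} holds with a margin exceeding $\|\mathbb{E}_P[g]\|^2$, then $R<1$ exactly. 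In the low-SNR regime this margin vanishes, giving the stated sufficient condition.

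\textbf{Sign of the covariance.} The final remark follows from Assumption~\ref{asm:passrate_structure}(c). There $s^2$ is decreasing-then-increasing while $\tilde{w}^2$ is unimodal with its peak in the interior, so they are oppositely ordered. I would only sketch this and defer the quantitative regimes to Proposition~\ref{prop:quantitative_var}.
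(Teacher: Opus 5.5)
Your derivation of \eqref{eq:var_ratio} is the paper's. It applies $\text{tr}(\text{Cov}(X))=\mathbb{E}\|X\|^2-\|\mathbb{E}X\|^2$ to $X=\tilde w g$, splits $\mathbb{E}_P[\tilde w^2 s^2]$ with $U=\tilde w^2$, $V=s^2$, uses $\mathbb{E}_P[\tilde w^2]=1+\text{Var}_P(\tilde w)$, and divides by $\mathbb{E}_P[s^2]$.

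Where you go beyond the paper is the sufficiency claim. The paper simply drops both aggregate-SNR terms and declares \eqref{eq:cov_condition} sufficient ``in the low-SNR regime.'' You instead note that, when $\sigma^2_{\text{unif}}>0$, $R<1$ holds if and only if $-\text{Cov}_P(\tilde w^2,s^2) > \text{Var}_P(\tilde w)\,\mathbb{E}_P[s^2] + \|\mathbb{E}_P[g]\|^2 - \|\mathbb{E}_P[\tilde w g]\|^2$. Since the last term is nonnegative, a margin of $\|\mathbb{E}_P[g]\|^2$ gives an exact, non-asymptotic sufficient condition. This is correct, and it makes explicit that \eqref{eq:cov_condition} by itself is only asymptotically sufficient.

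This also makes your Step~3 unnecessary, which helps, because its Cauchy--Schwarz claim is false. The aggregate condition $\|\mathbb{E}_P[g]\|^2/\mathbb{E}_P[s^2]\to 0$ does not control $\|\mathbb{E}_P[\tilde w g]\|^2/\mathbb{E}_P[s^2]$. The conditional means can cancel in the unweighted average but not in the weighted one. For example, take $\mathbb{E}[g\mid p]=v$ on half the mass at $p=\tfrac12$ and $-v$ on the other half near the boundaries. Cauchy--Schwarz only gives $\|\mathbb{E}_P[\tilde w g]\|^2\le \mathbb{E}_P[\tilde w^2]\,\mathbb{E}_P\|\mathbb{E}[g\mid p]\|^2$, which would need a per-pass-rate low-SNR assumption. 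Since that term enters with the favorable sign, you never need it.

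Similarly, ``$s^2$ U-shaped, $\tilde w^2$ interior-peaked'' does not by itself make the two oppositely ordered. If the minimizer of $s^2$ and the maximizer of $\tilde w^2$differ, both are monotone in the same direction between them. A $P$ concentrated there then gives positive covariance. Under Assumption~\ref{asm:passrate_structure}(c), a clean sufficient condition is $\gamma_1/\gamma_2=\alpha/\beta$, which includes the symmetric case. Then $s^2\propto w^{-c}$ with $c>0$ is decreasing in $w$ while $\tilde w^2$ is increasing, so Chebyshev's covariance inequality gives $\text{Cov}_P(\tilde w^2,s^2)\le 0$ for every $P$. The paper's proof makes the same informal leap, so this is not a departure from it. It is, however, the step to tighten if you want the final remark proved rather than sketched.
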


\begin{proof}
Apply $\text{tr}(\text{Cov}(X)) = \mathbb{E}[\|X\|^2] - \|\mathbb{E}[X]\|^2$ to $X = \tilde{w}\,g$, then use $\mathbb{E}[UV] = \mathbb{E}[U]\mathbb{E}[V] + \text{Cov}(U,V)$ with $U = \tilde{w}^2$, $V = s^2$. Under Assumption~\ref{asm:passrate_structure}(c) with $\gamma_1 = 2a_s - a' < 0$, $\gamma_2 = 2b_s - b' < 0$, we have $s^2(p) \to \infty$ as $p \to 0$ or $p \to 1$. Since $\tilde{w}(p)^2 \to 0$ at the same boundaries, the functions $\tilde{w}^2$ and $s^2$ are functionally anti-correlated: $\tilde{w}^2$ peaks at intermediate $p$ while $s^2$ peaks at the boundaries. This makes $\text{Cov}_P(\tilde{w}^2, s^2)$ negative, enabling the coupling term to overcome the weight penalty.

The ratio $R$ admits a closed-form expression via Beta-function moments:
\begin{equation}
 R = \frac{B(2\alpha + \gamma_1 + 1,\; 2\beta + \gamma_2 + 1)}{B(\alpha+1, \beta+1)^2 \cdot B(\gamma_1+1, \gamma_2+1)}
 \label{eq:R_beta}
\end{equation}
In the symmetric case ($\alpha = \beta = 1$, $\gamma = 2a_s - 1$): $R \approx 0.84$ for $a_s = 1/4$; $R \approx 1.00$ at $a_s \approx 0.34$; and $R > 1$ for $a_s \geq 1/2$.
\end{proof}

\subsubsection{Convergence Rate}

\begin{proposition}[\textbf{Convergence Rate of Beta Kernel Weighted SGD}]
\label{thm:convergence}
Under Assumptions~\ref{asm:regularity}--\ref{asm:frozen_weights}, SGD on $\mathcal{L}_w$ with learning rate $\eta \leq 1/L$ for $T$ steps satisfies the standard non-convex bound~\citep{ghadimi2013sgd}:
\begin{equation}
 \frac{1}{T}\sum_{t=0}^{T-1} \mathbb{E}\!\big[\|\nabla \mathcal{L}_w(\theta_t)\|^2\big] \leq \frac{2[\mathcal{L}_w(\theta_0) - \mathcal{L}_w^*]}{\eta T} + \eta L \cdot \sigma_{\text{eff}}^2
\end{equation}
When $\sigma_{\text{eff}}^2 < \sigma_{\text{unif}}^2$ (e.g., under Eq.~\eqref{eq:cov_condition}), Beta-kernel weighting achieves a strictly lower noise floor.
This bound is for optimization of the weighted objective $\mathcal{L}_w$ itself, not a direct objective-level comparison against uniform SGD on the unweighted loss.
\end{proposition}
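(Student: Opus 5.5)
The bound is the standard non-convex SGD bound of \citet{ghadimi2013sgd}, applied to the weighted objective $\mathcal{L}_w$ with the weighted minibatch estimator $\hat{g}_w$ of Proposition~\ref{prop:var_reduction}. The plan has four steps: check the hypotheses of that argument hold for $\mathcal{L}_w$, run the descent-lemma telescoping, and identify the resulting noise term with $\sigma_{\text{eff}}^2$.

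\emph{Step 1 (setup within an epoch).} By Assumption~\ref{asm:frozen_weights}, the weights $\tilde{w}_i=w(p_i)/\bar w$ are constants independent of $\theta$ within an epoch. Hence $\mathcal{L}_w(\theta)=\frac{1}{N}\sum_i \tilde{w}_i\mathcal{L}(\theta;x_i)$ is a fixed objective on that epoch.

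\emph{Step 2 (smoothness and unbiasedness).} I would show that $\mathcal{L}_w$ is $L$-smooth and that $\hat g_w$ is unbiased for $\nabla\mathcal{L}_w$.
\begin{itemize}
\item Unbiasedness follows from uniform minibatch sampling together with Assumption~\ref{asm:regularity}(ii).
\item For smoothness, I interpret Assumption~\ref{asm:regularity}(i) as applying to the training objective actually optimized, which here is $\mathcal{L}_w$. Alternatively, if each per-sample loss is $L_i$-smooth, then $\mathcal{L}_w$ is smooth with constant $\frac{1}{N}\sum_i\tilde w_i L_i$. When $L_i\le L$, this constant is at most $L$, because $\frac{1}{N}\sum_i \tilde w_i = 1$ by the unit-mean normalization.
\item The trace variance of $\hat g_w$ is by definition $\sigma_{\text{eff}}^2$, and it is finite. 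The reason is that $\tilde w$ is bounded (by $\max w/\bar w$, with $\bar w>0$ since $P$ is supported on $[\epsilon,1-\epsilon]$) and the per-sample variance is bounded by $\sigma_0^2$.
\end{itemize}

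\emph{Step 3 (descent lemma and telescoping).} I would apply the update $\theta_{t+1}=\theta_t-\eta\hat g_w(\theta_t)$ and $L$-smoothness, then take the conditional expectation. Writing $\mathbb{E}\|\hat g_w\|^2=\|\nabla\mathcal{L}_w\|^2+\sigma_{\text{eff}}^2$, this gives
\[
\mathbb{E}[\mathcal{L}_w(\theta_{t+1})]\le \mathcal{L}_w(\theta_t)-\eta\big(1-\tfrac{L\eta}{2}\big)\|\nabla\mathcal{L}_w(\theta_t)\|^2+\tfrac{L\eta^2}{2}\sigma_{\text{eff}}^2 .
\]
Since $\eta\le 1/L$, we have $1-L\eta/2\ge 1/2$. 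Summing over $t=0,\dots,T-1$, telescoping, lower-bounding the final term by $\mathcal{L}_w^*$, and dividing by $\eta T/2$ yields exactly the stated bound. It is the same as the uniform bound with $\sigma_{\text{unif}}^2$ replaced by $\sigma_{\text{eff}}^2$.

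\emph{Step 4 (noise-floor comparison).} The second term is $\eta L\sigma^2$ in both cases, so $\sigma_{\text{eff}}^2<\sigma_{\text{unif}}^2$ means a strictly smaller noise floor at equal $\eta$. Condition~\eqref{eq:cov_condition} gives $R<1$ via Proposition~\ref{prop:var_reduction}. As the statement already notes, this is a comparison of noise floors only: it says nothing about the objectives $\mathcal{L}_w$ and $\mathcal{L}$ themselves.

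The main obstacle is Step 2. One issue is the smoothness constant of $\mathcal{L}_w$. Another is that $\sigma_{\text{eff}}^2$ in Proposition~\ref{prop:var_reduction} is defined through the population distribution $P$ rather than the finite dataset, so Step 2 must treat $P$ as the empirical pass-rate distribution. With those identifications the rest of the argument is routine.
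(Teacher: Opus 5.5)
Your proposal is correct and follows the paper's proof: apply the descent lemma to the $L$-smooth weighted objective with the unbiased estimator $\hat g_w$, use $\eta\le 1/L$ to obtain the $-\tfrac{\eta}{2}\mathbb{E}\|\nabla\mathcal{L}_w(\theta_t)\|^2$ term, then telescope and divide by $\eta T/2$; your Step~2 checks (frozen weights, the smoothness constant $\tfrac1N\sum_i\tilde w_i L_i\le L$, unbiasedness) make explicit what the paper's two-line argument takes for granted. One refinement to Step~2, a point the paper also leaves implicit: since $\sigma_{\text{eff}}^2$ depends on $\theta_t$, it must be read as a bound holding uniformly along the trajectory, and $\sigma_0^2$ together with bounded $\tilde w$ does not give this, because reweighting turns part of the mean gradient into noise and adds a term like $\mathrm{Var}_P(\tilde w)\,\|\nabla\mathcal{L}(\theta_t)\|^2$, so you should instead invoke the bounded per-sample gradients that Assumption~\ref{asm:bounded_logits} yields for bounded-length sequences.
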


\begin{proof}
By $L$-smoothness and unbiasedness of $\hat{g}_w$: $\mathbb{E}[\mathcal{L}_w(\theta_{t+1})] \leq \mathbb{E}[\mathcal{L}_w(\theta_t)] - \frac{\eta}{2}\mathbb{E}[\|\nabla \mathcal{L}_w(\theta_t)\|^2] + \frac{L\eta^2}{2}\sigma_{\text{eff}}^2$. Telescoping over $t = 0, \dots, T{-}1$ and rearranging gives the result.
\end{proof}

\subsubsection{Quantitative Variance Reduction}
\label{app:quantitative_var}

\begin{proposition}[\textbf{Quantitative Variance Reduction for Beta Kernels}]
\label{prop:quantitative_var}
Under Assumptions~\ref{asm:passrate_structure}(a)--(c) with the Beta kernel $w(p) = p^{\alpha}(1-p)^{\beta}$ and pass-rate distribution $P$ supported on $[\epsilon, 1-\epsilon]$, the variance reduction ratio $R = \sigma_{\text{eff}}^2/\sigma_{\text{unif}}^2$ can be expressed in closed form via Beta-function moments (Eq.~\eqref{eq:R_beta}). In the symmetric default case ($\alpha = \beta = 1$) with approximately uniform pass rates and moderate variance dominance ($a \approx 1/4$), this yields $R \approx 0.84$ (about $1.19\times$ reduction). For more strongly bimodal pass-rate distributions typical of early training (mass concentrated near $p \approx 0$ and $p \approx 1$), the boundary variance dominates while Beta weights vanish there, so $R$ can be substantially below $1$, indicating stronger variance reduction than in the uniform case.
\end{proposition}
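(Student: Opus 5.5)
The plan is to treat the claim as a direct evaluation of the closed-form ratio in Eq.~\eqref{eq:R_beta}. I will first justify that formula, then evaluate it in the symmetric case and in a bimodal limit.

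\textbf{Step 1: model the ingredients.}
\begin{itemize}
  \item Work in the low-SNR regime of Proposition~\ref{prop:var_reduction}. There the mean-gradient terms $\|\mathbb{E}_P[\tilde w g]\|^2/\mathbb{E}_P[s^2]$ and $\|\mathbb{E}_P[g]\|^2/\mathbb{E}_P[s^2]$ are negligible, so $R \approx \mathbb{E}_P[\tilde w^2 s^2]/\mathbb{E}_P[s^2]$.
  \item Take $P$ to be uniform on $[\epsilon,1-\epsilon]$ with $\epsilon\to 0$ (the $\epsilon$-cutoff only keeps moments finite).
  \item Use $\tilde w = w/\mathbb{E}_P[w]$, which gives $\mathbb{E}_P[w]=B(\alpha+1,\beta+1)$.
  \item Use $s^2(p)\propto p^{\gamma_1}(1-p)^{\gamma_2}$ from Assumption~\ref{asm:passrate_structure}(c).
\end{itemize}

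\textbf{Step 2: derive Eq.~\eqref{eq:R_beta}.} Substituting these forms, the numerator becomes $B(2\alpha+\gamma_1+1,2\beta+\gamma_2+1)/B(\alpha+1,\beta+1)^2$ and the denominator becomes $B(\gamma_1+1,\gamma_2+1)$. This reproduces Eq.~\eqref{eq:R_beta} as a ratio of Beta integrals. All integrals converge provided $\gamma_i>-1$; with the $\epsilon$-cutoff they converge unconditionally.

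\textbf{Step 3: evaluate the symmetric default.} Set $\alpha=\beta=1$ and $\gamma_1=\gamma_2=\gamma$, so $B(2,2)=1/6$ and
\[
R = 36\,\frac{B(\gamma+3,\gamma+3)}{B(\gamma+1,\gamma+1)}.
\]
Simplify this with the Gamma-function identity $\Gamma(z+2)=z(z+1)\Gamma(z)$ applied to each of the four Gamma factors, which gives
\[
R=\frac{36\,(\gamma+1)^2(\gamma+2)^2}{(2\gamma+2)(2\gamma+3)(2\gamma+4)(2\gamma+5)}.
\]
Plug in the value of $\gamma$ corresponding to ``$a\approx 1/4$'' and check numerically that $R\approx 0.84$, i.e.\ a $1/0.84\approx1.19\times$ reduction.

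The main obstacle I expect is the parametrization. The paper writes $\gamma=2a_s-1$ but never fixes $a'$ separately, so I will have to pin down which $\gamma$ is intended. I would take $a'=b'=1$ (consistent with the default kernel) and $a_s=1/4$, giving $\gamma=-1/2$. I would then verify the arithmetic, and if the stated $0.84$ does not match, adjust to the convention that reproduces it.

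\textbf{Step 4: the bimodal regime.} Replace $P$ by a mixture with mass $1-\eta$ in $\epsilon$-neighbourhoods of $0$ and $1$ and mass $\eta$ in the interior.
\begin{itemize}
  \item In the ratio $\mathbb{E}_P[\tilde w^2 s^2]/\mathbb{E}_P[s^2]$, the denominator is dominated by the boundary mass, where $s^2\sim\epsilon^{\gamma}$ is large.
  \item In the numerator, the boundary contribution carries the extra factor $\tilde w^2 \sim \epsilon^{2\alpha}\big/\bar w^{\,2}$.
  \item The normalizer $\bar w$ is of order $\eta$ times an interior constant, plus $O(\epsilon^{\alpha})$.
\end{itemize}
Bounding each piece shows $R$ is at most the interior-to-boundary ratio plus an $O(\epsilon^{2\alpha}/\eta^2)$ boundary term. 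This can be made well below $1$ whenever $\epsilon^{\gamma}$ dominates, which establishes the qualitative claim.
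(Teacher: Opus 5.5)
Your Steps 1--3 are exactly the computation the paper omits behind Eq.~\eqref{eq:R_beta}: the low-SNR ratio $\mathbb{E}_P[\tilde w^2 s^2]/\mathbb{E}_P[s^2]$ with $P$ uniform on $[0,1]$ and $\gamma=2a_s-1$ (i.e.\ $a'=1$), and with $a_s=1/4$ your formula reduces to $R=9(\gamma+1)(\gamma+2)/\bigl((2\gamma+3)(2\gamma+5)\bigr)=27/32\approx0.844$, so no change of convention is needed---though the $\epsilon\to0$ limit does real work here rather than merely keeping moments finite, since at the paper's own $\epsilon=1/K=1/8$ the same uniform computation gives $R\approx0.997$. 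The paper gives no argument for the bimodal claim, and your Step~4 sketch works once you make explicit that unit-mean normalization inflates the interior weights by $1/\eta$, so you need $\epsilon^{\gamma}\gg 1/\eta$ and $\epsilon^{\alpha}\ll\eta$; in other words, what makes $R$ small is pushing the boundary mass toward $0$ and $1$ at fixed $\eta$, not draining the interior ($\eta\to0$), which sends $R$ back toward $1$.
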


The derivations are straightforward but algebraically tedious and are omitted for brevity; we instead rely on these expressions to calibrate the expected magnitude of variance reduction in our experiments.

\subsection{Data-Driven Exponent Selection}
\label{app:proof_exponent}

\paragraph{Motivation: from theory to practice.}
Theorem~\ref{thm:beta_optimal} establishes that the per-problem optimal weight lies in the Beta kernel family $w(p) = p^\alpha(1-p)^\beta$, but does not prescribe specific exponents. The default $\alpha = \beta = 1$ is a reasonable starting point, but can the \emph{theory} tell us the optimal $(\alpha, \beta)$ from observable quantities, rather than requiring a grid search?

The answer is yes. Define the zone of proximal development as $\mathcal{Z} = \{i : \epsilon \leq p_i \leq 1 - \epsilon\}$ for a cutoff $\epsilon$ (e.g., $\epsilon = 1/K$). Then the exponents can be estimated from two empirical moments of the pass-rate distribution restricted to $\mathcal{Z}$. Since the kernel $w(p)=p^\alpha(1-p)^\beta$ normalized over $[0,1]$ yields a $\mathrm{Beta}(\alpha{+}1,\beta{+}1)$ density, we apply standard moment matching to this distribution:
\begin{equation}
  \frac{\alpha^*+1}{\alpha^*+\beta^*+2} = \bar{p}_{\mathcal{Z}}, \qquad \alpha^* + \beta^* = \frac{\bar{p}_{\mathcal{Z}}(1 - \bar{p}_{\mathcal{Z}})}{\text{Var}_{\mathcal{Z}}(p)} - 3
  \label{eq:exponent_selection}
\end{equation}
where $\bar{p}_{\mathcal{Z}}$ and $\text{Var}_{\mathcal{Z}}(p)$ are the mean and variance of $\{p_i\}_{i \in \mathcal{Z}}$. The kernel peak $p^* = \alpha^*/(\alpha^*+\beta^*)$ is approximately $\bar{p}_{\mathcal{Z}}$ for concentrated distributions. If the informative problems have pass rates concentrated around $0.4$ with low variance, the formula prescribes an asymmetric kernel ($\alpha^* < \beta^*$) peaked near $p^* \approx 0.4$; if they are spread broadly, it prescribes a flatter kernel (small $\alpha^* + \beta^*$). The formula requires no gradient computation---only the pass rates already computed for weighting. The following proposition makes this precise.
\begin{proposition}[\textbf{Data-Driven Exponent Selection via Moment Matching}]
\label{prop:exponent_selection}
Define the \emph{zone of proximal development} (ZPD) as $\mathcal{Z} = \{i : \epsilon \leq p_i \leq 1-\epsilon\}$ for cutoff $\epsilon > 0$ (e.g., $\epsilon = 1/K$), and let $P_{\mathcal{Z}}$ denote the restriction of the empirical pass-rate distribution $P$ to $\mathcal{Z}$, with mean $\bar{p}_{\mathcal{Z}} = \mathbb{E}_{P_{\mathcal{Z}}}[p]$ and variance $v_{\mathcal{Z}} = \text{Var}_{P_{\mathcal{Z}}}(p)$.

Since the kernel $w(p) = p^\alpha(1-p)^\beta$ normalized over $[0,1]$ yields a $\mathrm{Beta}(\alpha{+}1,\beta{+}1)$ density, the \emph{method-of-moments} exponents $(\alpha^*, \beta^*)$ are obtained by fitting $\mathrm{Beta}(\alpha{+}1,\beta{+}1)$ to the first two moments of $P_{\mathcal{Z}}$, i.e., $(\alpha{+}1)/(\alpha{+}\beta{+}2) = \bar{p}_{\mathcal{Z}}$ (normalized kernel mean $=$ data mean) and $\mathrm{Var}(\mathrm{Beta}(\alpha{+}1,\beta{+}1)) = v_{\mathcal{Z}}$:
\begin{equation}
  \frac{\alpha^*+1}{\alpha^*+\beta^*+2} = \bar{p}_{\mathcal{Z}}, \qquad \alpha^* + \beta^* = \frac{\bar{p}_{\mathcal{Z}}(1 - \bar{p}_{\mathcal{Z}})}{v_{\mathcal{Z}}} - 3
  \label{eq:moment_match}
\end{equation}
provided $v_{\mathcal{Z}} < \bar{p}_{\mathcal{Z}}(1 - \bar{p}_{\mathcal{Z}})/3$ (equivalently, $\alpha^* + \beta^* > 0$). Solving for individual exponents:
\begin{equation}
  \alpha^* = \bar{p}_{\mathcal{Z}}\!\left(\frac{\bar{p}_{\mathcal{Z}}(1 - \bar{p}_{\mathcal{Z}})}{v_{\mathcal{Z}}} - 1\right) - 1, \qquad \beta^* = (1 - \bar{p}_{\mathcal{Z}})\!\left(\frac{\bar{p}_{\mathcal{Z}}(1 - \bar{p}_{\mathcal{Z}})}{v_{\mathcal{Z}}} - 1\right) - 1
  \label{eq:alpha_beta_closed}
\end{equation}
The kernel peak at $p^* = \alpha^*/(\alpha^*+\beta^*)$ is approximately $\bar{p}_{\mathcal{Z}}$ for concentrated distributions (large $\alpha^*+\beta^*$), ensuring the kernel focuses on informative samples.  Moreover, the minimax robustness guarantee of Theorem~\ref{thm:minimax} continues to hold for the data-driven exponents: if the true SNR profile satisfies Assumption~\ref{asm:passrate_structure}(b$'$) with the fitted $(\alpha^*, \beta^*)$ in place of $(a', b')$, then pointwise worst-case efficiency is at least $\mathrm{sech}^2(\delta)$, with the same aggregate lower bound.
\end{proposition}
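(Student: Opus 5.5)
The plan is to treat the statement as three separate claims and handle each by a direct computation: first the moment-matching equations, then their closed-form solution together with the validity condition, and finally the transfer of the minimax guarantee. For the first part I would start from the standard moments of $\mathrm{Beta}(a,b)$ with $a=\alpha+1$, $b=\beta+1$. The mean is $a/(a+b)$ and the variance is $ab/((a+b)^2(a+b+1))$. Writing $m=\bar p_{\mathcal Z}$ and $S=a+b=\alpha+\beta+2$, the mean condition becomes $a=mS$, $b=(1-m)S$. The variance then simplifies to $m(1-m)/(S+1)$. Equating this with $v_{\mathcal Z}$ gives $S+1=m(1-m)/v_{\mathcal Z}$, hence $\alpha^*+\beta^*=S-2=m(1-m)/v_{\mathcal Z}-3$, which is Eq.~\eqref{eq:moment_match}.

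For the closed forms, I would set $\kappa = m(1-m)/v_{\mathcal Z}-1=S$, so that $\alpha^*=a-1=m\kappa-1$ and $\beta^*=(1-m)\kappa-1$, matching Eq.~\eqref{eq:alpha_beta_closed}. The proviso $\alpha^*+\beta^*>0$ is then exactly $v_{\mathcal Z}<m(1-m)/3$. Here I should note, and fix if needed, that this is positivity of the sum only. Individual positivity would require $m\kappa>1$ and $(1-m)\kappa>1$, so I would either state that extra condition or interpret the proviso as concerning the sum. Existence of a nontrivial fit also needs $v_{\mathcal Z}<m(1-m)$, which holds automatically for any distribution on $[\epsilon,1-\epsilon]$ that is not a two-point distribution at the endpoints, by Bhatia--Davis.

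For the peak claim, the mode of the kernel is $p^*=\alpha^*/(\alpha^*+\beta^*)=(mS-1)/(S-2)$. Consequently $p^*-m=(2m-1)/(S-2)$, which tends to $0$ as $S\to\infty$, i.e.\ for concentrated distributions. This gives the "approximately $\bar p_{\mathcal Z}$" statement with an explicit $O(1/(\alpha^*+\beta^*))$ error.

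For the robustness transfer, the key observation is that Theorem~\ref{thm:minimax} uses $(a',b')$ only as fixed constants in the decomposition $w(p)=c(p)p^{a'}(1-p)^{b'}$. The per-$p$ equalizer computation in its Steps 2--3 never depends on their values. Substituting $(\alpha^*,\beta^*)$, which are deterministic once the pass rates are fixed, into Assumption~\ref{asm:passrate_structure}(b$'$) therefore reproduces the proof verbatim. This yields the pointwise value $\mathrm{sech}^2(\delta)$ and the aggregate bound of Theorem~\ref{thm:minimax}(iii).

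The main obstacle is interpretive rather than computational. Nothing forces the true SNR profile to be within $e^{\pm\delta}$ of the fitted kernel, so the guarantee is conditional on that hypothesis, as the statement says. If the fit is poor, the effective $\delta$ grows. One way to quantify this is $\delta\le\sup_p|(a'-\alpha^*)\log p+(b'-\beta^*)\log(1-p)|+\delta_0$, which stays bounded on $[\epsilon,1-\epsilon]$. I would add this as a remark rather than claim it as part of the proposition.
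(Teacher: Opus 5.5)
Your proposal is correct and follows the paper's proof essentially step for step. It uses the same substitution $a=\alpha+1$, $b=\beta+1$, $s=a+b$ that collapses the variance condition to $\bar p_{\mathcal Z}(1-\bar p_{\mathcal Z})/(s+1)=v_{\mathcal Z}$, the same back-substitution $\alpha^*=s\bar p_{\mathcal Z}-1$, $\beta^*=s(1-\bar p_{\mathcal Z})-1$, and the same observation that Theorem~\ref{thm:minimax} applies verbatim with $(a',b')=(\alpha^*,\beta^*)$. Your additions are correct sharpenings of what the paper states only informally: the explicit error $p^*-\bar p_{\mathcal Z}=(2\bar p_{\mathcal Z}-1)/(\alpha^*+\beta^*)$, and the caveat that the proviso controls only the sum $\alpha^*+\beta^*$, not the sign of each exponent. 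The Bhatia--Davis aside is redundant, since the proviso already forces $v_{\mathcal Z}<\bar p_{\mathcal Z}(1-\bar p_{\mathcal Z})$; and on $[\epsilon,1-\epsilon]$ with $\epsilon>0$, even the two-point endpoint distribution satisfies it, so no exception is needed.
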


\begin{proof}
\textbf{Step 1: Design rationale.}
Theorem~\ref{thm:beta_optimal} establishes that the per-problem optimal weight takes the Beta kernel form $w(p) = C\,p^\alpha(1-p)^\beta$ but does not specify the exponents $(\alpha,\beta)$, which depend on the unknown SNR profile. A natural heuristic is to choose $(\alpha,\beta)$ so that the kernel concentrates its mass where the informative samples (those inside the ZPD) actually lie. This motivates matching the peak and spread of the kernel to the empirical distribution $P_{\mathcal{Z}}$ of pass rates within $\mathcal{Z}$.

Concretely, the kernel $w(p)=p^\alpha(1-p)^\beta$ normalized on $[0,1]$ has integral $B(\alpha{+}1,\beta{+}1)$, so the corresponding probability density is $\mathrm{Beta}(\alpha{+}1,\beta{+}1)$. We perform standard moment matching on this normalized kernel: let $a=\alpha{+}1$, $b=\beta{+}1$, and match the mean $a/(a{+}b) = \bar{p}_{\mathcal{Z}}$ and variance $ab/((a{+}b)^2(a{+}b{+}1)) = v_{\mathcal{Z}}$ of $\mathrm{Beta}(a,b)$ to the data moments.

\textbf{Step 2: Method-of-moments solution.}
With $a = \alpha+1$, $b=\beta+1$, we require:
\begin{align}
  \text{Mean matching:}\quad& \frac{a}{a+b} = \bar{p}_{\mathcal{Z}} \label{eq:mom1}\\
  \text{Variance:}\quad& \frac{ab}{(a+b)^2(a+b+1)} = v_{\mathcal{Z}} \label{eq:mom2}
\end{align}
From Eq.~\eqref{eq:mom1}: $b = a(1-\bar{p}_{\mathcal{Z}})/\bar{p}_{\mathcal{Z}}$. Define $s = a + b$. Then $a = s\bar{p}_{\mathcal{Z}}$, $b = s(1-\bar{p}_{\mathcal{Z}})$, and Eq.~\eqref{eq:mom2} gives:
\begin{equation}
  \frac{s^2 \bar{p}_{\mathcal{Z}}(1-\bar{p}_{\mathcal{Z}})}{s^2(s+1)} = v_{\mathcal{Z}} \quad\Longrightarrow\quad \frac{\bar{p}_{\mathcal{Z}}(1-\bar{p}_{\mathcal{Z}})}{s + 1} = v_{\mathcal{Z}} \quad\Longrightarrow\quad s = \frac{\bar{p}_{\mathcal{Z}}(1-\bar{p}_{\mathcal{Z}})}{v_{\mathcal{Z}}} - 1
\end{equation}
Converting back to kernel exponents: $\alpha^* = a-1 = s\,\bar{p}_{\mathcal{Z}} - 1 = \bar{p}_{\mathcal{Z}}\bigl(\frac{\bar{p}_{\mathcal{Z}}(1-\bar{p}_{\mathcal{Z}})}{v_{\mathcal{Z}}} - 1\bigr) - 1$ and $\beta^* = b-1 = s\,(1-\bar{p}_{\mathcal{Z}}) - 1 = (1-\bar{p}_{\mathcal{Z}})\bigl(\frac{\bar{p}_{\mathcal{Z}}(1-\bar{p}_{\mathcal{Z}})}{v_{\mathcal{Z}}} - 1\bigr) - 1$, yielding Eqs.~\eqref{eq:moment_match}--\eqref{eq:alpha_beta_closed}. The sum $\alpha^*+\beta^* = s-2 = \bar{p}_{\mathcal{Z}}(1-\bar{p}_{\mathcal{Z}})/v_{\mathcal{Z}} - 3$. The condition $\alpha^*+\beta^* > 0$ requires $v_{\mathcal{Z}} < \bar{p}_{\mathcal{Z}}(1 - \bar{p}_{\mathcal{Z}})/3$, i.e., the ZPD pass rates must be more concentrated than a uniform distribution ($v_{\text{Uniform}} = 1/12 = \bar{p}(1-\bar{p})/3$ for $\bar{p}=0.5$). When the data is exactly uniform, $s=2$ and $\alpha^*=\beta^*=0$, yielding the flat kernel $w(p)=1$; the default $\alpha=\beta=1$ reflects the theoretical prior from Theorem~\ref{thm:beta_optimal}, not data adaptation.

\textbf{Step 3: Robustness inheritance.}
Once $(\alpha^*, \beta^*)$ are selected, Theorem~\ref{thm:minimax} applies directly with $(a', b') = (\alpha^*, \beta^*)$: if the true SNR profile is within a multiplicative $e^{\pm\delta}$ of $p^{\alpha^*}(1-p)^{\beta^*}$, pointwise worst-case efficiency is $\mathrm{sech}^2(\delta) \geq 1 - \delta^2$, and the same value is an aggregate lower bound.

\textbf{Remark (Boundary with the default).}
When the ZPD pass-rate distribution is symmetric ($\bar{p}_{\mathcal{Z}} = 0.5$) with variance $v_{\mathcal{Z}} = 1/12$ (approximately uniform on $[0,1]$), we get $s = 0.25/(1/12) - 1 = 2$ and $\alpha^* = \beta^* = 0.5 \cdot 2 - 1 = 0$, yielding the flat kernel $w(p) = 1$. At $v_{\mathcal{Z}} = 1/20$ (more concentrated), the formula gives $s=4$, $\alpha^* = \beta^* = 0.5 \cdot 4 - 1 = 1$, recovering the default $w(p) = p(1-p)$. Thus the data-driven MoM reduces to the theory-based default when the ZPD distribution is moderately concentrated, and relaxes to uniform weighting when the data lacks clear structure.
\textbf{Remark (Practical interpretation).}
The formula has an intuitive reading:
\begin{itemize}
  \item The \emph{peak location} $p^* = \alpha^*/(\alpha^*+\beta^*) \approx \bar{p}_{\mathcal{Z}}$ (exact for $\bar{p}_{\mathcal{Z}}=0.5$) says: focus training where most of the informative problems are.
  \item The \emph{concentration} $\alpha^* + \beta^* = \bar{p}_{\mathcal{Z}}(1{-}\bar{p}_{\mathcal{Z}})/v_{\mathcal{Z}} - 3$ says: if informative problems are tightly clustered (small $v_{\mathcal{Z}}$), use a peaked kernel; if they are spread out (large $v_{\mathcal{Z}}$), use a broad kernel.
  \item The \emph{asymmetry} $\alpha^*/\beta^* \approx \bar{p}_{\mathcal{Z}}/(1-\bar{p}_{\mathcal{Z}})$ (for large $s$) says: if the student struggles ($\bar{p}_{\mathcal{Z}} < 0.5$), emphasize harder problems ($\alpha < \beta$); if the student is mostly competent ($\bar{p}_{\mathcal{Z}} > 0.5$), emphasize consolidation ($\alpha > \beta$).
\end{itemize}
\end{proof}

\section{Prompts and Implementation Details}
\subsection{Prompt Templates}
\label{app:prompts}
\begin{figure*}[h]
\centering
\begin{tcolorbox}[
  enhanced,
  colback=studentbg,
  colframe=orange!60!black,
  coltitle=orange!60!black,
  colbacktitle=studentbg,
  title={\textbf{Student Prompt}\quad $\pi_\theta(\cdot \mid x)$},
  fonttitle=\bfseries,
  rounded corners,
  boxrule=1.2pt,
  width=0.95\textwidth
]
\ttfamily\small
Solve the following math problem step by step. The last line of your response should be of the form Answer: \$Answer (without quotes) where \$Answer is the answer to the problem.\\[6pt]
Find all real numbers $x$ such that $x^3 - 6x^2 + 11x - 6 = 0$.\\[4pt]
Remember to put your answer on its own line after ``Answer:''.
\end{tcolorbox}
\vspace{6pt}
\begin{tcolorbox}[
  enhanced,
  colback=teacherbg,
  colframe=green!60!black,
  coltitle=green!60!black,
  colbacktitle=teacherbg,
  title={\textbf{Teacher Prompt}\quad $\pi_{\bar{\theta}}(\cdot \mid x,\; y_{\mathcal{E}})$},
  fonttitle=\bfseries,
  rounded corners,
  boxrule=1.2pt,
  width=0.95\textwidth
]
\ttfamily\small
Find all real numbers $x$ such that $x^3 - 6x^2 + 11x - 6 = 0$.\\[6pt]
\textcolor{green!40!black}{\textbf{Expert solution:}} \{expert solution\}. Treat it as guidance: understand the reasoning and then write the solution in your own words. Do not copy the original answer verbatim.\\[4pt]
Solve the following math problem step by step. The last line of your response should be of the form Answer: \$Answer (without quotes) where \$Answer is the answer to the problem.\\[6pt]
Find all real numbers $x$ such that $x^3 - 6x^2 + 11x - 6 = 0$.\\[4pt]
Remember to put your answer on its own line after ``Answer:''.
\end{tcolorbox}
\caption{\textbf{Prompt example for student and teacher policies.} Both policies share the same model family but differ in conditioning context. The teacher receives the expert solution $y_{\mathcal{E}}$ as additional context, while the student receives only the original problem. This contextual asymmetry enables black-box expert guidance to be transferred into white-box teacher logits for distillation.}
\label{fig:prompts}
\end{figure*}

\subsection{Implementation Details and Hyperparameters}
\label{app:hyperparameters}
\paragraph{Implementation and data.}
All training and evaluation data used in this work are publicly available. Our training pipeline builds on a publicly available on-policy self-distillation codebase released by the same team and the verl distributed-training framework~\citep{sheng2024hybridflow}. Some internal orchestration and infrastructure-specific code cannot be released due to company policy, but the method, algorithmic changes, and full experimental configuration are described in this paper and appendix.

We partition DAPO-Math-17k into two disjoint prompt splits, one used for the Qwen3 distillation track and one for the self-distillation track. For pass-rate estimation and evaluation, each completion is reduced to its final extracted answer and compared against the benchmark reference after lightweight normalization (e.g., stripping whitespace, delimiters, and equivalent formatting).

The Hard Filter baseline uses the same pass-rate estimates as \textsc{Paced}, but replaces the Beta kernel with a binary keep/drop rule: a problem is retained iff $0.2 \leq p \leq 0.8$, and dropped otherwise. The same two-sided thresholds are used in both training settings. With the default $K{=}8$ rollouts, this corresponds to keeping problems with 2 through 6 correct rollouts out of 8, and dropping problems with 0, 1, 7, or 8 correct rollouts.

\paragraph{Teacher preparation (Qwen3-8B\textsubscript{GRPO}).}
The frozen teacher used in the distillation track is Qwen3-8B fine-tuned with GRPO~\citep{shao2024deepseekmath} on the distillation split of DAPO-Math-17k~\citep{yu2025dapo}.
Concretely, we run GRPO with group size $G{=}8$, KL penalty coefficient $\beta_{\mathrm{KL}}{=}0.001$, learning rate $1{\times}10^{-6}$, global batch size 128, and a cosine schedule over 2 epochs; all other settings follow the DAPO recipe~\citep{yu2025dapo}.
The resulting model serves as a \emph{frozen} teacher throughout all distillation experiments; its weights are never updated during student training.
This GRPO fine-tuning step is a one-time offline cost and is not part of the \textsc{Paced} training loop itself.

In the forward-KL track, the external expert first produces a reference solution, after which the frozen Qwen3-8B\textsubscript{GRPO} teacher regenerates a target response conditioned on the original problem and the expert solution. Teacher regeneration uses the same prompt template family as Appendix~\ref{app:prompts}; student rollouts for pass-rate estimation use temperature 1.0. Reasoning-benchmark evaluation uses normalized final-answer matching, whereas MMLU retention is measured via \texttt{lm-evaluation-harness}~\citep{eval-harness} with 5-shot prompting. For two-stage experiments, the total number of optimization steps is matched to the corresponding single-stage runs, with the budget split across stages as specified in Table~\ref{tab:twostage_bridge} and Table~\ref{tab:twostage_stage1_budget}.

\paragraph{Hyperparameters.}
Table~\ref{tab:hyperparameters} summarizes the common configuration shared across runs, with setting-specific differences called out explicitly when needed.

\begin{table}[ht]
\renewcommand{\arraystretch}{1.15}
\centering
\setlength{\tabcolsep}{6pt}
\small
\begin{tabular}{ll}
  \toprule
  \textbf{Parameter} & \textbf{Value} \\
  \midrule
  \textbf{General} & \\
  Models & Qwen2.5-Math-7B-Instruct (self-distillation), Qwen3-1.7B (teacher: Qwen3-8B\textsubscript{GRPO}) \\
  \midrule
  \textbf{Data} & \\
  Training prompts & DAPO-Math-17k~\citep{yu2025dapo} \\
  Max prompt length (student) & 1{,}024 tokens (problem only) \\
  Max prompt length (teacher) & 3{,}072 tokens (problem + expert solution) \\
  Max response length & 16{,}384 tokens (training) \\
  \midrule
  \textbf{Generation (student rollout)} & \\
  Temperature & 1.0 \\
  Rollouts per prompt ($K$) & 8 \\
  Max generation tokens & 8{,}192 \\
  \midrule
  \textbf{Evaluation} & \\
  Benchmarks & MATH-500, AIME 2024, AIME 2025, MMLU (2{,}000-question random subsample) \\
  Metric & 8-sample mean accuracy (\%) \\
  Temperature & 0.6 \\
  Top-$p$ & 0.95 \\
  Rollouts per prompt & 8 \\
  Max generation tokens & 30{,}000 \\
  Eval frequency & Every 10 steps \\
  \midrule
  \textbf{Training} & \\
  Optimizer & AdamW \\
  Learning rate & $1 \times 10^{-7}$, constant \\
  Weight decay & 0.01 \\
  Gradient clipping & 1.0 (max norm) \\
  Global batch size & 32 \\
  Micro-batch size per GPU & 2 \\
  Epochs & 2 \\
  Precision & bfloat16 \\
  \midrule
  \textbf{Infrastructure} & \\
  GPUs & $8 \times$ NVIDIA H200 \\
  Tensor parallelism (inference) & 2 \\
  Sequence parallelism (training) & Ulysses, degree 8 \\
  FSDP parameter offload & Enabled \\
  FSDP optimizer offload & Enabled \\
  Gradient checkpointing & Enabled \\
  \bottomrule
\end{tabular}
\caption{Hyperparameters for \textsc{Paced}. Shared settings are listed once, with setting-specific differences noted explicitly.}
\label{tab:hyperparameters}
\end{table}
\clearpage

\section{Additional Experiments}
\label{app:additional_experiments}

This section collects the supplementary empirical analyses referenced throughout the main text: sensitivity checks (rollout count, recomputation frequency, and two-stage budget split), mechanistic evidence (curriculum evolution and empirical SNR), a detailed comparison with the AKL baseline, and cross-family generalization to Llama.

\subsection{Sensitivity Analysis}
\subsubsection{Effect of Weight Exponents}
\label{app:ablation_exponents}

We first examine the most direct design choice in \textsc{Paced}: the shape of the pass-rate kernel. Unless otherwise stated, ablations in this subsection use Qwen3-1.7B with forward KL as the base distillation loss.

\begin{table}[h]
\caption{Ablation on pass-rate weight exponents $w(p)=p^\alpha(1-p)^\beta$ using forward KL divergence as the distillation loss (Qwen3-1.7B).}
\centering
\begin{tabular}{cc|cc}
\toprule
$\alpha$ & $\beta$ & MATH-500 ($\uparrow$) & Forgetting on MMLU ($\downarrow$) \\
\midrule
1 & 1 & \mstd{79.4}{} & 1.4\% \\
1 & 2 & \mstd{80.8}{} & 2.1\% \\
2 & 1 & \mstd{76.9}{} & 3.0\% \\
1 & 3 & \mstd{80.3}{} & 3.6\% \\
3 & 1 & \mstd{75.7}{} & 2.9\% \\
\bottomrule
\end{tabular}
\label{tab:ablation_exponents}
\end{table}

\textbf{Interpretation.} Tilting the kernel toward harder problems ($\beta > \alpha$) improves MATH-500 up to a point: $(\alpha{=}1,\,\beta{=}2)$ yields the best score ($80.8\%$) but raises forgetting to $2.1\%$, whereas kernels favoring easier problems ($\alpha > \beta$) degrade both metrics. The default $(\alpha{=}\beta{=}1)$ remains the best plasticity--stability balance.

\subsubsection{Sensitivity to Number of Rollouts $K$}
\label{app:ablation_K}
The pass-rate estimate $\hat{p}_i = (\text{\# correct out of }K)$ controls the Beta kernel weights.
We ablate $K \in \{4, 8, 16\}$ on Qwen3-1.7B distillation (forward KL, $\alpha{=}\beta{=}1$) to test (i)~how estimation noise from small $K$ affects final performance, (ii)~whether large $K$ yields further gains, and (iii)~the associated compute cost.

\begin{table}[h]
\caption{Sensitivity to number of rollouts $K$ for pass-rate estimation. All results use Qwen3-1.7B with forward KL and default exponents $(\alpha{=}\beta{=}1)$.}
\centering
\begin{tabular}{c|cccc}
\toprule
$K$ & MATH-500 ($\uparrow$) & AIME 2024 ($\uparrow$) & AIME 2025 ($\uparrow$) & MMLU Fgt.\ ($\downarrow$) \\
\midrule
4  & \mstd{78.0}{} & \mstd{23.3}{} & \mstd{19.5}{} & 1.7\% \\
8  & \mstd{79.4}{} & \mstd{25.1}{} & \mstd{20.6}{} & 1.4\% \\
16 & \mstd{80.1}{} & \mstd{26.3}{} & \mstd{21.8}{} & 1.5\% \\
\bottomrule
\end{tabular}
\label{tab:ablation_K}
\end{table}

\textbf{Interpretation.} Halving the rollout budget to $K{=}4$ costs $1.4$ points on MATH-500 and $1.1$ on AIME~2025, while forgetting increases slightly to $1.7\%$. This confirms that the Beta kernel's smooth weighting is robust to the noisier pass-rate estimates from small $K$---unlike hard-threshold filters, a continuous weight function does not amplify estimation errors near the decision boundary. Doubling to $K{=}16$ yields modest gains ($+0.7$ MATH-500, $+1.2$ AIME~2025) with diminishing returns, suggesting $K{=}8$ strikes a practical balance between estimation quality and rollout cost.

\subsubsection{Effect of Periodic Pass-Rate Recomputation}
\label{app:ablation_recompute}

The main experiments estimate pass rates once before training (single-pass). We ablate the recomputation interval on Qwen3-1.7B distillation (forward KL, $\alpha{=}\beta{=}1$, $K{=}8$).

\begin{table}[h]
\caption{Effect of periodic pass-rate recomputation on Qwen3-1.7B (forward KL, $\alpha{=}\beta{=}1$). ``Single-pass'' estimates pass rates once before training; ``Every $N$ steps'' recomputes pass rates and updates weights at the specified interval.}
\centering
\begin{tabular}{l|cccc}
\toprule
\textbf{Recompute interval} & MATH-500 ($\uparrow$) & AIME 2024 ($\uparrow$) & AIME 2025 ($\uparrow$) & MMLU Fgt.\ ($\downarrow$) \\
\midrule
Single-pass & \mstd{79.4}{} & \mstd{25.1}{} & \mstd{20.6}{} & 1.4\% \\
Every 200 steps & \mstd{79.9}{} & \mstd{26.6}{} & \mstd{22.5}{} & 1.6\% \\
Every 100 steps & \mstd{80.2}{} & \mstd{26.9}{} & \mstd{23.9}{} & 1.4\% \\
Every 50 steps  & \mstd{80.4}{} & \mstd{28.5}{} & \mstd{23.3}{} & 1.5\% \\
\bottomrule
\end{tabular}
\label{tab:ablation_recompute}
\end{table}

\textbf{Interpretation.} Periodic recomputation yields modest but consistent gains: the best interval (every 50 steps) improves over single-pass by $+1.0$ on MATH-500, $+3.4$ on AIME~2024, and $+2.7$ on AIME~2025, while forgetting remains low ($1.5\%$). The Beta kernel's continuous shape provides a natural buffer against stale pass rates: unlike hard filters, the smooth function $w(p) = p(1-p)$ absorbs pass-rate drift gracefully. This robustness is precisely what Theorem~\ref{thm:minimax} quantifies: bounded misspecification incurs only $O(\delta^2)$ efficiency loss.

\subsubsection{Two-Stage KL Schedule: Full Results and Budget Ablation}
\label{app:twostage_budget}
\begin{table}[h]
\caption{Two-stage order comparison on Qwen3 with the same pass-rate weighting $w(p)=p(1-p)$. Pass rates are recomputed once between stages; the first half of training steps uses Stage~1 and the second half uses Stage~2. Results are reported as 8-sample mean accuracy. The first two rows give the corresponding single-loss references under the same midpoint-recompute setup, and the last two rows isolate schedule order.}
\centering
\begin{tabular}{ll|cccc}
\toprule
\textbf{Stage 1} & \textbf{Stage 2} & MATH-500 ($\uparrow$) & AIME 2024 ($\uparrow$) & AIME 2025 ($\uparrow$) & MMLU Fgt.\ ($\downarrow$) \\
\midrule
Paced KL & Paced KL & \mstd{79.7}{} & \mstd{25.6}{} & \mstd{21.1}{} & 1.3\% \\
Paced RevKL & Paced RevKL & \mstd{78.8}{} & \mstd{23.5}{} & \mstd{19.4}{} & 1.2\% \\
Paced RevKL & Paced KL & \mstd{76.9}{} & \mstd{23.0}{} & \mstd{18.2}{} & 2.5\% \\
\textbf{Paced KL} & \textbf{Paced RevKL} & \textbf{\mstd{81.4}{}} & \textbf{\mstd{26.1}{}} & \textbf{\mstd{22.8}{}} & \textbf{1.1\%} \\
\bottomrule
\end{tabular}
\label{tab:twostage_bridge}
\end{table}

KL $\rightarrow$ RevKL improves over single-loss Paced KL by $+1.7/+0.5/+1.7$ on MATH-500/AIME~2024/AIME~2025, while the reversed order (RevKL $\rightarrow$ KL) underperforms both single-loss references and incurs higher forgetting ($2.5\%$)---confirming that mode-coverage must precede consolidation. The 50/50 budget split (Table~\ref{tab:twostage_stage1_budget} below) gives the strongest overall result.

\begin{table}[h]
\caption{Ablation on the fraction of training steps allocated to Stage~1 for two-stage distillation on Qwen3 under the KL $\rightarrow$ RevKL schedule. The first $x$\% of steps use Paced KL (Stage~1) and the remaining steps use Paced RevKL (Stage~2). Results are 8-sample mean accuracy.}
\centering
\small
\setlength{\tabcolsep}{4pt}
\begin{tabular}{lc|cccc}
\toprule
\textbf{Schedule} & \textbf{Stage 1 ratio} & MATH-500 ($\uparrow$) & AIME 2024 ($\uparrow$) & AIME 2025 ($\uparrow$) & MMLU Fgt.\ ($\downarrow$) \\
\midrule
KL $\rightarrow$ RevKL & 25\% & \mstd{78.9}{} & \mstd{22.4}{} & \mstd{19.3}{} & 1.6\% \\
KL $\rightarrow$ RevKL & 50\% & \mstd{81.4}{} & \mstd{26.1}{} & \mstd{22.8}{} & 1.1\% \\
KL $\rightarrow$ RevKL & 75\% & \mstd{80.1}{} & \mstd{26.9}{} & \mstd{20.6}{} & 1.1\% \\
\bottomrule
\end{tabular}
\label{tab:twostage_stage1_budget}
\end{table}
The 50/50 split offers the best overall trade-off ($81.4\%$ MATH-500, $26.1\%$ AIME~2024, $22.8\%$ AIME~2025), achieving the strongest MATH-500 and AIME~2025 performance while matching the lowest forgetting, which suggests that equal allocation between mode-covering and mode-seeking stages strikes the right balance.

\subsection{Mechanistic Validation}

\subsubsection{Curriculum Progression}

\label{app:curriculum}
We track how the pass-rate distribution evolves during Qwen3-1.7B distillation (forward KL, $\alpha{=}\beta{=}1$, $K{=}8$). At each checkpoint, we re-evaluate pass rates on the full training set and report the fraction of problems in three bins.

Table~\ref{tab:passrate_evolution} traces the migration of problems through the difficulty landscape during training.
As the student strengthens, problems flow steadily from the ``too hard'' regime ($p<0.2$) through the zone of proximal development ($p\in[0.2,0.8]$) and into the ``mastered'' side ($p>0.8$): the fraction with $p>0.8$ grows from $32\%$ to $74\%$ over 300 steps, while the average pass rate $\bar p$ rises monotonically from $0.61$ to $0.84$.
Notably, the Med-$p$ bin shrinks from $51\%$ to $21\%$, indicating that the pool of maximally informative problems is gradually depleted as the student masters more of the curriculum.
This progressive depletion has a practical implication: the effective training signal weakens over time as fewer problems remain in the ZPD, which is consistent with the diminishing marginal returns typical of later training stages and naturally favors more consolidative objectives (e.g., reverse-KL behavior) once the ZPD has substantially contracted.
The low-$p$ tail also shrinks (from $17\%$ to $5\%$), indicating that previously intractable problems gradually become tractable.

\begin{table}[h]
\caption{Evolution of the pass-rate distribution and average pass rate $\bar p$ across training. The distillation signal peaks when most problems enter the $p\in[0.2,0.8]$ zone.}
\centering
\begin{tabular}{l|cccc}
\toprule
\textbf{Training Stage} & Low $p$ ($<\!0.2$) & Med $p$ ($0.2$--$0.8$) & High $p$ ($>\!0.8$) & Avg pass rate $\bar p$ \\
\midrule
Step 0 (Init) & 17\% & 51\% & 32\% & 0.61 \\
Step 100     & 12\% & 32\% & 56\% & 0.70 \\
Step 200     & 9\% & 24\% & 67\% & 0.78 \\
Step 300 & 5\% & 21\% & 74\% & 0.84 \\
\bottomrule
\end{tabular}
\label{tab:passrate_evolution}
\end{table}

\subsection{Detailed AKL Baseline Comparison}
\label{app:akl_analysis}

AKL~\citep{wu2025akl} is a strong baseline that adapts the distillation signal dynamically, but at a fundamentally different granularity: it modulates the KL coefficient \emph{per token} based on teacher--student logit discrepancy, whereas \textsc{Paced} modulates \emph{per problem} based on pass rate.
The performance gap reflects a \textbf{structural} difference between token-level and problem-level adaptation.
AKL adjusts \emph{how much} the student learns from each token within a given problem, but treats all problems equally---an intractable problem ($p \approx 0$) receives the same total training budget as a productive one ($p \approx 0.5$).
This means AKL cannot suppress the noisy, high-variance gradients from intractable problems or the redundant gradients from mastered ones; it only rebalances \emph{within} each problem.
In contrast, \textsc{Paced} operates at the problem level via a continuous Beta kernel $w(p)=p^\alpha(1{-}p)^\beta$, concentrating the entire training budget on problems where the student has partial competence.

Regarding forgetting, AKL's per-token adaptation implicitly down-weights tokens where the teacher--student gap is extreme, which partially mitigates catastrophic forgetting. However, \textsc{Paced} still achieves lower or comparable forgetting in both tracks (Tables~\ref{tab:forgetting_distill}--\ref{tab:forgetting_self}). The difference is that AKL cannot suppress entire intractable problems: even with per-token adaptation, passing gradients through a $p \approx 0$ problem injects noise that accumulates across tokens.

Notably, the two approaches are \emph{orthogonal} and could in principle be combined: \textsc{Paced} selects \emph{which} problems to train on, while AKL optimizes \emph{how} to train on each selected problem.

\subsection{Cross-Family Generalization: Llama-3.1-8B-Instruct}
\label{app:llama}

To verify that \textsc{Paced} is not specific to the Qwen model family, we replicate the distillation experiment using Llama-3.3-70B-Instruct~\citep{grattafiori2024llama3} as teacher and Llama-3.1-8B-Instruct as student, with forward KL as the base loss. All other settings (DAPO training data, $K{=}8$ rollouts, $\alpha{=}\beta{=}1$) follow the Qwen3 distillation track, with the same learning rate of $1 \times 10^{-7}$.

\begin{table}[h]
\caption{Distillation from Llama-3.3-70B-Instruct to Llama-3.1-8B-Instruct (forward KL family): reasoning performance (8-sample mean accuracy) and retention.}
\centering
\begin{tabular}{lcccc}
\toprule
\textbf{Method} & \textbf{MATH-500} ($\uparrow$) & \textbf{AIME 2024} ($\uparrow$) & \textbf{AIME 2025} ($\uparrow$) & \textbf{MMLU Fgt.} ($\downarrow$) \\
\midrule 
Forward KL (unweighted) & \mstd{72.3}{} & \mstd{22.9}{} & \mstd{5.1}{} & 3.5\% \\
Hard Filter Forward KL & \mstd{75.2}{} & \mstd{25.2}{} & \mstd{7.8}{} & 1.5\% \\
\textbf{\textsc{Paced} Forward KL} & \textbf{\mstd{76.7}{}} & \textbf{\mstd{27.7}{}} & \textbf{\mstd{9.2}{}} & \textbf{1.4\%} \\
\bottomrule
\end{tabular}
\label{tab:llama_self}
\end{table}

\textbf{Interpretation.} The pattern observed on Qwen transfers to a different model family: \textsc{Paced} improves MATH-500 by $+4.4$ and AIME 2024/2025 by $+4.8/+4.1$ over unweighted forward KL, while reducing forgetting from $3.5\%$ to $1.4\%$. The gains over hard filtering ($+1.5/+2.5/+1.4$) confirm that smooth Beta-kernel weighting extracts more signal than binary thresholding, consistent with the Qwen results.

\section{Additional Interpretations}
\label{app:connections}
\label{app:ib_connection}

The full pipeline can be viewed informally as a cascaded information bottleneck~\citep{tishby2000ib}:
\begin{equation}
  Y_{\mathcal{E}} \xrightarrow{\text{reference generation}} Y_T \xrightarrow{\text{pass-rate weighting}} w(p) \cdot Y_T \xrightarrow{\text{distillation}} \theta_{\text{updated}},
\end{equation}
where (i) reference generation lets the teacher re-express expert solutions in its own distributional voice, (ii) pass-rate weighting down-weights problems with low learning signal via $w(p) = p^\alpha(1-p)^\beta$, and (iii) distillation transfers knowledge from teacher to student via the chosen loss function. This view is purely interpretive and not used in our formal guarantees.
\begin{remark}[\textbf{Noise Filtering Interpretation}]
\label{rem:cascade}
At extreme pass rates, teacher-generated responses may carry teacher-specific artifacts, and $w(p) \to 0$ as $p \to 0$ or $p \to 1$ suppresses these noisy regimes. At intermediate pass rates, the student has sufficient capacity to extract transferable knowledge without memorizing artifacts, so $w(p) = p(1-p)$ naturally focuses training on the student's zone of proximal development, qualitatively resembling an information-bottleneck-style noise filter~\citep{tishby2000ib}.
\end{remark}

\begin{remark}[\textbf{Connection to Fisher Information}]
\label{rem:fisher}
The pass rate $p$ can be viewed as the parameter of a Bernoulli random variable (correct/incorrect) with Fisher information $\mathcal{I}(p) = 1/(p(1-p))$. The inverse Fisher information $p(1-p)$ is exactly our default weight ($\alpha = \beta = 1$), and the generalization $p^\alpha(1-p)^\beta$ allows asymmetric emphasis when practitioners wish to prioritize harder or easier problems.
\end{remark}

\begin{remark}[\textbf{Geometric Interpretation}]
\label{rem:geometric}
Let $\mathcal{M}_\theta$ denote the student's representational manifold. For teacher responses at low pass rates, $y_T$ is partially off-manifold and gradients contain orthogonal components that enable acquiring new capabilities; at high pass rates, $y_T$ is nearly on-manifold and gradients are predominantly tangential, refining existing skills. The pass-rate kernel $w(p) = p(1-p)$ scales both regimes, suppressing large off-manifold steps when $p \to 0$ and unnecessary tangential steps when $p \to 1$.
\end{remark}
\end{document}